\definecolor{dartmouthgreen}{rgb}{0.05, 0.5, 0.06}
\definecolor{ceruleanblue}{rgb}{0.16, 0.32, 0.75}
\newtheorem{theorem}{Theorem}[section]
\newtheorem{definition}[theorem]{Definition}
\newtheorem{assumption}[theorem]{Assumption}
\newtheorem*{problem*}{Problem}
\newtheorem*{remark*}{Remark}
\numberwithin{equation}{section}
\numberwithin{table}{section}
\renewcommand{\tilde}{\widetilde}
\newcommand{\E}[1]{{\mathbb{E}}\left[#1\right]}
\newcommand{\junk}[1]{}
\newcommand{\vertiii}[1]{{\left\vert\kern-0.25ex\left\vert\kern-0.25ex\left\vert #1 \right\vert\kern-0.25ex\right\vert\kern-0.25ex\right\vert}}
\newcommand{\one}{\ensuremath{\mathbbm{1}}}
\def\b1{{\bf 1}}
\def\U{{\cal U}}
\def\V{{\cal V}}
\global\long\def\E{\mathbb{E}}
\newcommand{\wtilde}{\widetilde}
\newcommand{\para}{\theta}
\newcommand{\distdata}{\wtilde{x}}
\newcommand{\origdata}{x}
\newcommand{\calD}{\mathcal{D}}
\newcommand{\calL}{\mathcal{L}}
\newcommand{\calA}{\mathcal{A}}
\newcommand{\calB}{\mathcal{B}}
\newcommand{\calM}{\mathcal{M}}
\newcommand{\calW}{\mathcal{W}}
\newcommand{\calO}{\mathcal{B}}
\newcommand{\pref}[1]{\prettyref{#1}}
\newcommand{\savehyperref}[2]{\texorpdfstring{\hyperref[#1]{#2}}{#2}}
\newtheorem{defi}{Definition}[section]
\newtheorem{property}{Property}
\newtheorem{thm}{Theorem}[section]
\newtheorem{lem}[thm]{Lemma}
\DeclareRobustCommand{\cev}[1]{%
  {\mathpalette\do@cev{#1}}%
}
\newcommand{\do@cev}[2]{%
  \vbox{\offinterlineskip
    \sbox\z@{$\m@th#1 x$}%
    \ialign{##\cr
      \hidewidth\reflectbox{$\m@th#1\vec{}\mkern4mu$}\hidewidth\cr
      \noalign{\kern-\ht\z@}
      $\m@th#1#2$\cr
    }%
  }%
}
\title{A Unified Learn-to-Distort-Data Framework for Privacy-Utility Trade-off in Trustworthy Federated Learning}
\author{
  Xiaojin Zhang\textsuperscript{\rm 1} Mingcong Xu\textsuperscript{\rm 1} Wei Chen\textsuperscript{\rm 1}\\
  \textsuperscript{\rm 1} Huazhong University of Science and Technology\\ 
  }
\date{}
\begin{document}

\begin{titlepage}
\def\thepage{}
\thispagestyle{empty}

\maketitle

\begin{abstract}
 In this paper, we first give an introduction to the theoretical basis of the privacy-utility equilibrium in federated learning based on Bayesian privacy definitions and total variation distance privacy definitions. We then present the \textit{Learn-to-Distort-Data} framework, which provides a principled approach to navigate the privacy-utility equilibrium by explicitly modeling the distortion introduced by the privacy-preserving mechanism as a learnable variable and optimizing it jointly with the model parameters. We demonstrate the applicability of our framework to a variety of privacy-preserving mechanisms on the basis of data distortion and highlight its connections to related areas such as adversarial training, input robustness, and unlearnable examples. These connections enable leveraging techniques from these areas to design effective algorithms for privacy-utility equilibrium in federated learning under the \textit{Learn-to-Distort-Data} framework.
\end{abstract}

\end{titlepage}

\thispagestyle{empty}

\newpage

\section{Introduction}\label{sec: introduction}
Federated learning has evolved as a hopeful paradigm for enabling collaborative learning among various parties while maintaining data privacy \cite{mcmahan2017communication, kairouz2019advances}. In federated learning, numerous clients combine efforts to train a joint model without directly disclosing their private information. Instead, every client uses its personal data to train an independent model, communicating only the model modifications with the central server, which aggregates these updates to ameliorate the global model. This strategy allows clients to benefit from the collective information of the whole federation without compromising the confidentiality of their individual datasets.

However, relevant studies have manifested that federated learning is susceptible to privacy attacks, where an adversary can infer important details about the parties' private data by analyzing the model modifications shared\cite{melis2019exploiting, nasr2019comprehensive}. To mitigate this vulnerability, various privacy-preserving mechanisms have been proposed, including differential privacy \cite{dwork2006calibrating, abadi2016deep}, secure multiparty computation \cite{bonawitz2017practical}, and homomorphic encryption \cite{hardy2017private}. These mechanisms aim to preserve the confidentiality of the parties' data by mixing noise or encryption into the model updates prior to communicating them with the server.

While these privacy-preserving mechanisms have shown promise in enhancing the confidentiality of federated learning systems, they often result in decreased model utility or usefulness. The inherent equilibrium between privacy and utility poses significant challenges in designing productive and efficient mechanisms for preserving privacy in federated learning. On one hand, strong privacy guarantees are essential to safeguard the confidence of the clients' details and maintain the trust of the participants. On the other hand, high model utility is crucial to make sure that the practicality and usefulness of the learned models in practical implementations.

Within this paper, we first manifest the theoretical basis of the privacy-utility equilibrium in federated learning based on Bayesian privacy definitions and total variation distance privacy definitions. We present quantitative relationships between privacy leakage and utility loss under these privacy definitions, which provide valuable insights into the inherent trade-offs in privacy-preserving federated learning.
We then propose the \textit{Learn-to-Distort-Data} framework, which provides a principled approach to navigate the privacy-utility equilibrium by explicitly modeling the distortion introduced by the privacy-preserving mechanism as a learnable variable and optimizing it jointly with the model parameters. The framework formulates the privacy-preserving federated learning issue as a constrained optimisation issue, with the target of minimizing the utility loss while ensuring that the privacy disclosure remains within an acceptable threshold.

We demonstrate the applicability of the \textit{Learn-to-Distort-Data} framework to a wide range of privacy-preserving mechanisms based on data distortion, including differential privacy, secure multiparty computation, homomorphic encryption, and data compression. By properly designing the distortion variable and the loss function, the framework can capture the essential features of different mechanisms and optimize their performance in a unified manner.

Furthermore, we highlight the connections between the \textit{Learn-to-Distort-Data} formulation and related areas such as adversarial training, input robustness, and unlearnable examples. Adversarial training aims to improve the resistance of machine learning models against adversarial attacks by expanding the data with adversarially perturbed samples. Input robustness refers to a model's ability to stay its property facing input perturbations or noise. Unlearnable examples are input examples that are difficult or impossible for a model to learn from, which can arise due to various factors such as label noise, data corruption, or adversarial attacks. We show that the techniques and insights from these areas can be readily adapted to our \textit{Learn-to-Distort-Data} framework to design effective algorithms for privacy-utility equilibrium in federated learning.
The following is a summary of our paper's principal contributions:
\begin{itemize}
\item We introduce the theoretical foundations of the privacy-utility equilibrium in federated learning based on Bayesian privacy definitions and total variation distance privacy definitions, providing quantitative relationships between privacy leakage and utility loss.
\item We propose the \textit{Learn-to-Distort-Data} framework, which provides a principled approach to navigate the privacy-utility equilibrium by explicitly modeling the distortion introduced by the privacy-preserving mechanism as a learnable variable and optimizing it jointly with the model parameters.
\item We discuss the applicability of our framework to multiple kinds of  privacy-preserving mechanisms based on data distortion, including differential privacy, secure multiparty computation, homomorphic encryption, and data compression.
\item We highlight the connections between the \textit{Learn-to-Distort-Data} formulation and related areas such as adversarial training, input robustness, and unlearnable examples, enabling leveraging techniques from these areas to design effective algorithms for privacy-utility equilibrium in federated learning.
\end{itemize}
The remaining part in this paper is structured in the following order. In \pref{sec: preliminaries}, we provide the fundamental concepts and notations that are essential for understanding our \textit{Learn-to-Distort-Data} framework. In \pref{sec:PU-tradeoff}, we give an introduction to the theoretical foundations of the privacy-utility equilibrium in federated learning based on Bayesian privacy definitions. In \pref{subsect:leak-GJS}, we propose the theoretical foundations of the privacy-utility equilibrium in federated learning based on $\alpha$-skew Jensen-Shannon Divergence privacy definitions. In \pref{subsec:thm-tv}, we introduce the privacy-utility equilibrium based on total variation distance privacy definitions. In \pref{sec: learn_to_distort}, we present our \textit{Learn-to-Distort-Data} framework and demonstrate its applicability to various privacy-preserving mechanisms based on data distortion. In \pref{sec: related_areas}, we give a discussion of the connections between the \textit{Learn-to-Distort-Data} formulation and related areas, and highlight how these connections can be leveraged to design effective algorithms for privacy-utility equilibrium in federated learning. Finally,  we give the conclusion of the whole article in \pref{sec: conclusion}.

\section{Related Work}

This section investigates relevant research on privacy-utility trade-off in the aspects of machine learning, federated learning, and then introduce adversarial training and unlearnable examples, which are closely related to our proposed \textit{Learn-to-Distort-Data} framework.

\subsection{Advancing Privacy-Utility Trade-offs in Machine Learning and Federated Learning}

The trade-off of privacy-utility has been a fundamental challenge in various domains, including machine learning and federated learning. This section begins with a review of relevant research on the trade-off of privacy-utility about the general machine learning, and then focus on the specific setting of federated learning.

\subsubsection{Privacy-Utility Trade-offs in Machine Learning}

In the machine learning literature, the trade-off of privacy-utility has been researched from diverse perspectives. For instance, in locally private contexts, \cite{duchi2013local} examined how to balance privacy and convergence rate. They showed the convergence rate of learning algorithms can be maintained while ensuring a certain level of privacy protection. \cite{rassouli2019optimal} proposed an ideal trade-off of privacy-utility via linear programming. However, their closed-form solution is only suitable for a particular binary scenario.

The rate-distortion-uncertainty region, which describes the ideal trade-off among compression rate, distortion, and privacy leakage, has been explored in the context of information theory \cite{reed1973information, yamamoto1983source, sankar2013utility}. \cite{sankar2013utility} quantified utility using privacy and accuracy using entropy, and defined a utility-privacy balance region for independent and identically distributed datasets by leveraging rate-distortion theory. However, extending these results to more general scenarios remains an open challenge.

\cite{wang2016relation} proposed a united privacy-distortion framework, in which distortion was measured by computing the anticipated distance in Hamming units between the input and output. They assessed privacy leaks by differential privacy (DP), identifiability, and reciprocal information as separate measurements, and established the connection between these different privacy indicators.

\subsubsection{Balancing Privacy and Utility in Federated Learning}

In light of federated learning, the trade-off of privacy-utility has obtained significant attention as a result of the dispersed character of the learning process and the need to defend the privacy of the clients' data. \cite{zhang2022no, zhang2023towards, zhang2023trading, zhang2023meta, zhang2023probably, zhang2023game, zhang2023theoretically} have investigated the trade-off of privacy-utility about federated learning by considering distortion on the model parameters. They formulate this trade-off as a constrained optimisation issue, where the objective is to reduce the loss in utility while satisfying a predefined constraint on privacy leakage.

In contrast to these works, our learn-to-distort framework focuses on the trade-off of privacy-utility about federated learning by introducing distortion on the data itself. By explicitly modeling the distortion on the data as a learnable variable, our framework provides a principled approach to navigate the trade-off of privacy-utility in federated learning. This distinction highlights the novelty of our work compared to the existing literature on the trade-off of privacy-utility in the aspect of federated learning.

Our learn-to-distort framework unifies various techniques for preserving the private, including differential privacy, secure multiparty computation, and homomorphic encryption, under a common optimization problem. By properly designing the distortion variable and the loss function, our framework can capture the essential features of different mechanisms and optimize their performance in a unified manner. This unified perspective on privacy-preserving federated learning distinguishes our work from the existing studies that focus on specific privacy-preserving mechanisms.

Furthermore, our theoretical analysis of the trade-off of privacy-utility under the learn-to-distort framework provides valuable insights for the design and evaluation of mechanisms for preserving privacy in federated learning environments. By establishing a fundamental relationship between the privacy leakage and the distortion extent, our work lays the foundation for developing principled approaches to acquire the ideal equilibrium between privacy protection and model property in federated learning.

Table \ref{tab: related_work} summarizes the key contributions of the related work on the trade-off of privacy-utility in both machine learning and federated learning contexts.

\begin{table}[h]
\small
\centering
\caption{Related work on the trade-off of privacy-utility in machine learning and federated learning.}
\label{tab: related_work}
\renewcommand{\arraystretch}{1.5} % 增加行高
\begin{tabular}{|>{\centering\arraybackslash}m{0.25\textwidth}|>{\centering\arraybackslash}m{0.35\textwidth}|>{\centering\arraybackslash}m{0.35\textwidth}|}
\hline
\thead{Reference} & \thead{Context} & \thead{Key Contributions} \cr
\hline
\cite{duchi2013local} & Machine Learning & Balance between privacy and convergence rate \cr
\hline
\cite{rassouli2019optimal} & Machine Learning & Optimal trade-off via linear programming \cr
\hline
\makecell{\cite{reed1973information} \\ \cite{yamamoto1983source} \\ \cite{sankar2013utility}} & Machine Learning & Rate-distortion-equivocation region \cr
\hline
\cite{sankar2013utility} & Machine Learning & Utility-privacy balance region \cr
\hline
\cite{wang2016relation} & Machine Learning & Relationship between different privacy metrics \cr
\hline
\makecell{\cite{zhang2022no, zhang2023towards}\\ \cite{zhang2023trading, zhang2023meta}\\ \cite{zhang2023probably, zhang2023game, zhang2023theoretically}} & Federated Learning & \makecell{The trade-off of privacy-utility with\\ parameter distortion} \cr
\hline
Our work & Federated Learning & The trade-off of privacy-utility with data distortion \cr
\hline
\end{tabular}
\end{table}

In summary, our learn-to-distort framework boosts privacy-preserving federated learning to a new level through a unified approach to navigate the trade-off of privacy-utility through data distortion. Our work complements and extends the existing literature on the trade-off of privacy-utility in both machine learning and federated learning contexts, and opens up new avenues for future research in this important area.

\subsection{Adversarial Training}\label{subsec: adversarial_training}

A formula called adversarial training aims to strengthen the robustness of models used for machine learning  against adversarial attacks. The central idea is to extend the training data with adversarially perturbed samples and train the model to accurately categorize these examples. One way to formula optimization problem for adversarial training is as a min-max issue:

\begin{align}\label{eq: adv_train_opt_problem_reformulated}
\begin{array}{r@{\quad}l@{}l@{\quad}l}
\quad\min_{\para}&\max_{\distdata}\calL(f(\para; \distdata), y),\\
\text{s.t.,} & \|\distdata - \origdata\|\le\epsilon.
\end{array}
\end{align}

Here, $\para$ represents the model parameters, $\distdata$ denotes the perturbed input samples, $\origdata$ means the initial input samples, $y$ is the matching label, and $\calL(\cdot)$ denotes the loss function. The constraint $\|\distdata - \origdata\|\le\epsilon$ ensures that the perturbations are bounded within a certain range $\epsilon$ to maintain the similarity between the perturbed and original samples.

The optimization problem can be further reformulated in terms of the perturbation variable $\delta$:

\begin{align}\label{eq: adv_train_opt_problem_perturbation}
\begin{array}{r@{\quad}l@{}l@{\quad}l}
\quad\min_{\para}&\max_{\delta}\calL(f(\para; x + \delta), y),\cr
\text{s.t.,} & \|\delta\|\le\epsilon.
\end{array}
\end{align}

In this formulation, $\delta$ represents the perturbation added to the original input $x$. The constraint $\|\delta\|\le\epsilon$ limits the magnitude of the perturbation to ensure that the perturbed samples remain within a certain distance from the original samples.

Adversarial training has been widely studied in the literature \cite{goodfellow2014explaining, madry2017towards} and is useful to boost the resilience of models used for machine learning to a number of adversarial assaults. The \textit{Learn-to-Distort-Data} framework proposed in this paper shares some similarities with adversarial training, as both aim to find the optimal perturbation or distortion that optimizes a certain objective function while satisfying certain constraints. As such, the techniques and insights from adversarial training can be adapted to generate privacy-preserving distortions and optimize the trade-off of privacy-utility in federated learning systems.

\subsection{Unlearnable Examples}

Based on the research of adversarial training, other methods to enhance robustness have been put forward already. Including the study of unlearnable samples \cite{huang2021unlearnable}, by adding disturbance to the training data, in order to prevent the model from studying effective features, so as to defend the privacy of data; It is also possible to introduce data disturbance to make the model produce the wrong classification effect, and to boost the robustness of the neural network model using error minimization attacks \cite{zheng2020evaluating}.

The optimization objective of the error minimization attack problem is as follows:
\begin{equation}
\begin{aligned}
\min_{\theta }\max_{\delta } & \quad \mathbb{E}_{(x, y) \sim T}[\mathcal{L}(f(x + \delta; \theta), y)], \\
\text{s.t.} & \quad \|\delta\| \leq \epsilon.
\end{aligned}
\end{equation}
Here, ${y}$ is the true label, $(x, y)$ represents the training data point, and $T$ symbolizes the distribution of the training data set. By introducing disturbance to the training data, the optimization problem maximizes the model's loss, thereby increasing the unlearnability of the data and protecting the data privacy.

The optimization objective of the error minimization attack problem is as follows:
\begin{equation}
\begin{aligned}
\min_{\theta }\min_{\delta } & \quad \mathbb{E}_{(x, y) \sim T}[\mathcal{L}(f(x + \delta; \theta), y)], \\
\text{s.t.} & \quad \|\delta\| \leq \epsilon.
\end{aligned}
\end{equation}
 The central purpose of the attacker is to add subtle perturbations to the training sample, while minimizing the model's classification errors in the training data, thus the model can't be correctly classified during the training process, thus reducing the model's correctness on the test data.

\section{Preliminaries}\label{sec: preliminaries}

This section presents the fundamental premises and notations that are essential for understanding our learn-to-distort framework. We first provide an overview of federated learning and its vulnerability to privacy attacks. We then formally define the semi-honest attacker in federated learning, and introduce the privacy leakage and utility loss metrics used throughout the paper. In addition, we give a detailed notation table in \pref{sec: notations table}.

\subsection{Federated Learning}

Federated learning is a collaborative learning paradigm that permits numerous clients to train a joint model without directly providing their private information \cite{mcmahan2017communication}. In a typical federated learning setup, every client $k \in \{1, \dots, K\}$ has a personal dataset $\mathcal{D}_k = \{(x_i, y_i)\}_{i=1}^{n_k}$, where $x_i \in \mathbb{R}^d$ represents the feature vector and $y_i \in \mathbb{R}$ denotes the corresponding label. The purpose is to acquire a global model $f_{\theta}: \mathbb{R}^d \rightarrow \mathbb{R}$ parameterized by $\theta$ that minimizes the following objective function:
\begin{equation}\label{eq: federated_learning_objective}
\min_{\theta} \frac{1}{K} \sum_{k=1}^K \frac{1}{n_k} \sum_{i=1}^{n_k} \mathcal{L}(f_{\theta}(x_i), y_i),
\end{equation}
where the loss function is symbolized by $\mathcal{L}(\cdot, \cdot)$.

The federated learning procedure can be presented as follows:
\begin{enumerate}
\item Every client $k$ trains a model locally $\theta_k$ on their personal dataset $\mathcal{D}_k$ by minimizing the local objective function:
\begin{equation}\label{eq: local_objective}
\min_{\theta_k} \frac{1}{n_k} \sum_{i=1}^{n_k} \mathcal{L}(f_{\theta_k}(x_i), y_i).
\end{equation}
\item Every client $k$ shares the local model modifications $\Delta \theta_k = \theta_k - \theta$ with a central server, where $\theta$ represents the current global model.
\item The server modifies the global model by combining the local modifications from all clients:
\begin{equation}\label{eq: global_update}
\theta \leftarrow \theta + \frac{1}{K} \sum_{k=1}^K \Delta \theta_k.
\end{equation}
\item Steps 1-3 are repeated for multiple rounds until the global model converges or a ideal level of performance is achieved.
\end{enumerate}

While federated learning provides a promising approach for enabling collaborative learning while preserving data privacy, it is susceptible to privacy attacks, as we discuss next.

\subsection{Semi-honest Attacker in Federated Learning}

The semi-honest attacker of federated learning is an adversary who follows the regulations correctly but tries to infer sensitive info about the clients' private information by studying the model modifications shared. We imagine a scenario in which the attacker has entry to the global model parameter $\theta$ and aims to conclude the initial dataset $\breve{d}$ of a specific user.

The attacker can formulate an optimization problem to conclude the initial dataset on the basis of the uncovered model parameter. Let $d$ denotes the inferred dataset, and let $g(d) = \frac{\partial \mathcal{L} (\theta, d)}{\partial \theta}$ be a function that maps the dataset $d$ to the corresponding model parameter. The attacker's objective is to seek a dataset $d$ that minimizes the deviation between the inferred parameter $g(d)$ and the exposed model parameter $\theta$:
\begin{equation}\label{eq: attacker_objective}
\min_{d} \| g(d) - \theta \|^2.
\end{equation}

By solving this optimization problem, the attacker can obtain an inferred dataset $d$ that is not far from the original dataset $\breve{d}$ in respect of the model parameter. The quality of the inferred dataset determines whether the assault is successful, which can be measured by the privacy leakage metric, as we define next.

\section{Trade-off between Bayesian Privacy and Model Utility}\label{sec:PU-tradeoff}

This section provide a theoretical examination of the privacy-utility trade-off within federated learning when Bayesian privacy is measured using $\alpha$-skewed Jensen-Shannon divergence and total variation distance. We first introduce the necessary notations and definitions.

$F^{\calA}_k$, $F^{\calO}_k$, and $F^{*}_k$ symbolize the attacker's faith distribution about the client's private data $D_k$ upon viewing the safeguarded info, without observing any info, and viewing the unprotected content, respectively. The corresponding probability density functions are denoted as $f^{\calA}$, $f^{\calO}$, and $f^{*}$. The model parameters are represented by $w$, and the personal data from client $k$ is denoted as $D_k$.

\begin{definition}[Utility Loss]\label{defi: utility_loss}
The definition of utility loss is
\begin{align*}
   \epsilon_{u} =  \frac{1}{K}\sum_{k=1}^K \epsilon_{u, k} = \frac{1}{K}\sum_{k=1}^K [U(P^{o}, D_k) - U(P^d, D_k)],
\end{align*}
where $U(P, D_k) = \mathbb E_{w_k\sim P}\frac{1}{|D_k|}\sum_{d\in D_k}U(w_k, d)$.
\end{definition}

This utility loss measures the variation in model performance across the unprotected model information $P^{o}$ and the protected model information $P^d$, averaged over all clients.

\begin{definition}[Majority Gap]
Assume that the utility resulted from sharing any parameter in $\mathcal W^*_k$ achieves the maximal value. Specifically, for any $w\in\mathcal W^*_k$, it holds that

\begin{align*}
    U(w, D_k) = \max_{w'\in\mathcal W_k} U(w', D_k).
\end{align*}
For each client $k$, let $\Delta_k$ be the maximum constant satisfying that
\begin{align}
   \int_{\calW_{\Delta_k}} p^{d}_{W_k}(w) dw\le\frac{||P_a^{d} - P_a^{o}||_{\text{TV}}}{2},
\end{align}
where $\calW_{\Delta_k} = \{w\in\calW^d_k: |U(w, D_k) - U(w^{*}, D_k)|\le\Delta_k\}$, $w^{*}\in\mathcal W^*_k$, and $\Delta_k$ depends on the protection mechanism $\mathcal M_k$, the utility function $U$, and the data set $D_k$.  
\end{definition}

The majority gap $\Delta_k$ symbolizes the maximum utility distinction across the protected model parameters and the optimal model parameters for client $k$, such that the probability of sampling a protected model parameter with utility close to the optimal is confined by the total variation distance across the protected and unprotected model distributions.

\begin{assumption}[Positive Majority Gap]\label{assump: defi_of_Delta}
Assume that $\Delta_k>0$ for each client $k$.
\end{assumption}

This assumption ensures that the majority gap is positive for all clients, which is necessary for deriving the quantitative relationships between utility loss and  privacy leakage.

Now we provide a reduced limit on the utility loss under the scenarios when privacy is measured using $\alpha$-skewed Jensen-Shannon divergence and total variation distance. These lower bound also illustrate the quantitative relationships across utility loss and privacy leaks. %The detailed analyses are deferred to \pref{sec: sketch_of_the_analysis}.

\subsection{Measuring Privacy Leakage using $\alpha$-skew Jensen-Shannon Divergence}\label{subsect:leak-GJS}

We proceed to study a generalized Jensen-Shannon divergence using the skewness parameter $\alpha$.

\begin{definition}[Generalized Jensen-Shannon Divergence]\cite{nielsen2020generalization}
   Let $f^{\calM_\alpha}_{D_k}(d) = \alpha f^{\calA}_{D_k}(d) + (1-\alpha)f^{\calO}_{D_k}(d)$ represent the mixture density of distribution $F^{\calM_\alpha}_k$, we define
\begin{align}
JS_{\alpha}(F^{\calA}_k || F^{\calO}_k) = \alpha KL(F^{\calA}_k || F^{\calM_\alpha}_k) + (1-\alpha)KL(F^{\calO}_k || F^{\calM_\alpha}_k),
\end{align}
where $\alpha$ represents the preference of smoothing between two distributions, and offers the ability to adjust the divergence skew.  \cite{deasy2020constraining}.
\end{definition}

Jensen-Shannon divergence in generalized allows a skewed mixing of the two distributions, controlled by the parameter $\alpha$. When $\alpha = 1/2$, it reduces to the standard Jensen-Shannon divergence.

\begin{definition}[Bayesian Privacy Leakage with $\alpha$-skew JS Divergence]\label{defi: average_privacy_alpha_JSD}
The  client $k$'s average privacy leakage is identified as
\begin{align}\label{eq: def_of_apl_alpha_JS}
\epsilon_{p, k} = [JS_{\alpha}(F^{\calA}_k || F^{\calO}_k)]^{1/e}.
\end{align}
\end{definition}

This definition quantifies the privacy leakage by the $\alpha$-skewed Jensen-Shannon divergence between the attacker's faith distributions with and without viewing the safeguarded info.

Prior to going into great detail about the analysis, let's recall the salient features of standard JS divergence.
\begin{property}\label{property: TriangleiIneq}
The triangle inequality is satisfied by the Jensen-Shannon divergence square root. Specifically, 
    \begin{align*}
        \sqrt{JS(F^{*}_k || F^{\calO}_k)} - \sqrt{JS(F^{\calA}_k || F^{\calO}_k)}\le \sqrt{JS(F^{\calA}_k || F^{*}_k)}.
    \end{align*}
\end{property}

\begin{property}\label{property: derive_using_AM-GM_inequality}
From AM-GM inequality, we have
\begin{align*}
    \frac{f^{\calA}_{D_k}(d)}{f^{\calM}_{D_k}(d)}\le \frac{f^{\calM}_{D_k}(d)}{f^{\calO}_{D_k}(d)}.
\end{align*}
\end{property}

When $\alpha = 1/2$, $JS_{\alpha}$ reduces to $JS$ divergence. Given the maximum privacy leakage, we derive a lower bound for the utility loss using $JS_{\alpha}$.

The following lemma illustrates that $JS_{\alpha}^{1/e}$ satisfies the triangle inequality, which follows directly from Lemma 4.7 of \cite{chen2008metrics}. \pref{property: TriangleiIneq} is generalized as follows, which holds for any $\alpha\in [0,1]$.

%Before introducing the main theorem, we first recall a key property of the $\alpha$-skewed Jensen-Shannon divergence.

\begin{lem}\label{lem: triangle_inequality_mt}
For $\alpha$-skew Jensen-Shannon Divergence, it holds that
\begin{align*}
    JS_{\alpha}(F^{*}_k || F^{\calO}_k)^{1/e}\le JS_{\alpha}(F^{\calA}_k || F^{\calO}_k)^{1/e} + JS_{\alpha}(F^{\calA}_k || F^{*}_k)^{1/e}.
\end{align*}
\end{lem}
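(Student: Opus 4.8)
The plan is to deduce the statement from two ingredients: that $\sqrt{JS_\alpha}$ itself satisfies the triangle inequality, and the elementary fact that raising a quantity obeying the triangle inequality to any power in $(0,1]$ preserves that inequality (the ``snowflake'' transform). First I would record the base inequality, namely that for any three belief distributions $P,Q,R$ one has $\sqrt{JS_\alpha}(P\|R)\le \sqrt{JS_\alpha}(P\|Q)+\sqrt{JS_\alpha}(Q\|R)$. This is precisely the $\alpha$-skew generalization of \pref{property: TriangleiIneq} (which is the $\alpha=1/2$ case), and per the surrounding text it is supplied by Lemma~4.7 of \cite{chen2008metrics}; I would take it as given rather than reprove it.

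Next I would establish the scalar inequality $(a+b)^{s}\le a^{s}+b^{s}$ for all $a,b\ge 0$ and every exponent $s\in(0,1]$. This holds because $t\mapsto t^{s}$ is concave and nondecreasing on $[0,\infty)$ and vanishes at the origin, hence is subadditive. The one numerical point worth checking is that the exponent I actually need is admissible: since $JS_\alpha^{1/e}=\big(\sqrt{JS_\alpha}\big)^{2/e}$ and $e>2$, the relevant value is $s=2/e\approx 0.736\in(0,1]$.

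With these in hand the lemma follows by a short chain. Writing $x=F^{*}_k$, $y=F^{\calA}_k$, $z=F^{\calO}_k$ and abbreviating $\rho=\sqrt{JS_\alpha}$, monotonicity of $t\mapsto t^{2/e}$ applied to the base triangle inequality gives $\rho(x,z)^{2/e}\le\big(\rho(x,y)+\rho(y,z)\big)^{2/e}$, and the scalar inequality with $s=2/e$ bounds the right-hand side by $\rho(x,y)^{2/e}+\rho(y,z)^{2/e}$. Re-expressing each factor as $JS_\alpha(\cdot\|\cdot)^{1/e}$ then yields the claimed bound.

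The step I expect to be the main obstacle is the asymmetry of $JS_\alpha$: for $\alpha\neq 1/2$ the skew divergence is not symmetric in its two arguments, so I must verify that the base triangle inequality borrowed from \cite{chen2008metrics} delivers the factors in exactly the orientation $JS_\alpha(F^{\calA}_k\|F^{\calO}_k)$ and $JS_\alpha(F^{\calA}_k\|F^{*}_k)$ that appear on the right-hand side, rather than their transposes. Concretely, I would either invoke a form of Lemma~4.7 stated for the directed skew divergence, or reduce to it after checking that the required orientation of $\sqrt{JS_\alpha}$ still obeys the triangle inequality through the skew mixture $f^{\calM_\alpha}_{D_k}=\alpha f^{\calA}_{D_k}+(1-\alpha)f^{\calO}_{D_k}$. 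Once the base inequality is pinned down in the correct orientation, the power-transform argument above is entirely routine.
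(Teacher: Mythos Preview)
The paper offers no argument for this lemma beyond the single sentence preceding its statement: ``The following lemma illustrates that $JS_{\alpha}^{1/e}$ satisfies the triangle inequality, which follows directly from Lemma~4.7 of \cite{chen2008metrics}.'' That is the entirety of the paper's proof---the $1/e$-power inequality is imported wholesale from \cite{chen2008metrics} with no intermediate step.

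Your route is more elaborate: you first assume the triangle inequality for $\sqrt{JS_\alpha}$ and then lift it to the $1/e$ power via the snowflake inequality $(a+b)^{s}\le a^{s}+b^{s}$ with $s=2/e\in(0,1]$. The snowflake step and the arithmetic check $2/e<1$ are correct. The mismatch is in what you attribute to Chen: the paper credits Lemma~4.7 of \cite{chen2008metrics} with the $1/e$-power statement itself, not with the square-root version. You are assuming Chen supplies the stronger base fact (square-root triangle inequality for the skew divergence), which would certainly imply what you need, but if Chen's lemma only yields the $1/e$ exponent directly then your intermediate step is both unjustified and unnecessary---you could simply cite Chen as the paper does. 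So before committing to the two-stage argument, you should verify which exponent Chen's Lemma~4.7 actually delivers for general~$\alpha$.

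Your worry about asymmetry is well placed and is not an artifact of your approach: for $\alpha\neq\tfrac12$ the skew divergence is genuinely asymmetric, so the ``triangle inequality'' here is a directed statement whose orientation must match the one displayed in the lemma. The paper does not address this either; it, too, simply defers to \cite{chen2008metrics}. So on this point you are no worse off than the paper, and you are right to flag it as the place where a careful reader would want to see the cited result stated precisely.
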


This lemma states that the triangle inequality is satisfied by the $\alpha$-skewed Jensen-Shannon divergence's $e$-th root, which is crucial for deriving the quantitative relationship between privacy leakage and utility loss.

We note that \pref{property: derive_using_AM-GM_inequality} does not hold for any $\alpha\in [0,1]$. In other words, $\frac{f^{\calA}_{D_k}(d)}{f^{\calM_\alpha}_{D_k}(d)}\le \frac{f^{\calM_\alpha}_{D_k}(d)}{f^{\calO}_{D_k}(d)}$ only holds when $\alpha = 1/2$. This prevents us to provide a quantitative relationship following the analysis when privacy leakage is measured using JS divergence. To circumvent this issue, we directly build a link across the generalized JS divergence and the KL divergence. Specifically,
\begin{align}\label{eq: general_beta_alpha_mt}
    JS_{\alpha}(F^{\calA}_k || F^{*}_k)\le \max\{2\alpha KL(F^{\calA}_k || F^{\calM_\alpha}_k), 2(1-\alpha)KL(F^{*}_k || F^{\calM_\alpha}_k)\}.
\end{align}

The RHS of \pref{eq: general_beta_alpha_mt} is upper bounded by $\max_{d\in\mathcal D}\left\{2\alpha\left|\log\frac{f^{\calA}_{D_k}(d)}{f^{\calM_\alpha}_{D_k}(d)}\right|, 2(1-\alpha)\left|\log\frac{f^{*}_{D_k}(d)}{f^{\calM_\alpha}_{D_k}(d)}\right|\right\}$ from the definition of KL divergence, which is further upper bounded by $2\alpha(1-\alpha)(e^{2\delta}-1)||P^{o}_k - P^{d}_k||_{\text{TV}}$.

Kindly consult \pref{sec: generalized_JSD} for a comprehensive analysis.

\begin{thm}[Quantitative Balance between Privacy and Utility I]\label{thm: utility-privacy trade-off_GJSD_mt}
Assuming \pref{assump: defi_of_Delta} is correct, $\epsilon_{u}$ and $\epsilon_{p, k}$ have been identified in \pref{defi: utility_loss} and \pref{defi: average_privacy_alpha_JSD}. Then 
\begin{align*}
   \frac{1}{K}\sum_{k=1}^K JS_{\alpha}(F^{*}_k || F^{\calO}_k)^{1/e}\le\frac{1}{K}\sum_{k=1}^K \epsilon_{p, k} + \frac{1}{K}\sum_{k=1}^K \left[2\alpha(1-\alpha)(e^{2\delta}-1)||P^{o}_k - P^{d}_k||_{\text{TV}}\right]^{1/e},
\end{align*}
where $\delta = \max_{w\in \mathcal{W}_k, d \in \mathcal D_k} \left|\log\left(\frac{f_{D_k|W_k}(d|w)}{f_{D_k}(d)}\right)\right|$ symbolizes the maximum privacy leaks across all possible information $w$ released by client $k$, which is a constant unaffected by the safeguarding method. \\
Furthermore, based on \pref{assump: defi_of_Delta}, we have that

\begin{align*}
 \frac{1}{K}\sum_{k=1}^K JS_{\alpha}(F^{*}_k || F^{\calO}_k)^{1/e}\le\overline\epsilon_{p} + 2\gamma[2\alpha(1-\alpha)(e^{2\delta}-1)]^{1/e}\epsilon_u/\overline\Delta,
\end{align*}
where $\gamma = \frac{\frac{1}{K}\sum_{k=1}^K ||P^{o}_k - P^d_k||^{1/e}_{\text{TV}}}{||P^d - P^{o}||_{\text{TV}}}$, $\overline\Delta = \sum_{k=1}^K \Delta_k$, and $\overline\epsilon_{p} = \frac{1}{K}\sum_{k=1}^K \epsilon_{p, k}$.
\end{thm}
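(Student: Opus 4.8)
The plan is to prove the two displayed inequalities in turn, using the triangle inequality for $JS_{\alpha}^{1/e}$ (\pref{lem: triangle_inequality_mt}) as the backbone, converting the residual divergence term into a total-variation quantity, and finally trading that total variation for utility loss through the majority-gap \pref{assump: defi_of_Delta}.

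First I would fix a client $k$ and apply \pref{lem: triangle_inequality_mt} to the three belief distributions $F^{\calA}_k$, $F^{\calO}_k$, $F^{*}_k$, which gives $JS_{\alpha}(F^{*}_k || F^{\calO}_k)^{1/e} \le JS_{\alpha}(F^{\calA}_k || F^{\calO}_k)^{1/e} + JS_{\alpha}(F^{\calA}_k || F^{*}_k)^{1/e}$. The first term on the right is exactly $\epsilon_{p,k}$ by \pref{defi: average_privacy_alpha_JSD}, so it remains to control the cross term $JS_{\alpha}(F^{\calA}_k || F^{*}_k)$. Here I would invoke \pref{eq: general_beta_alpha_mt}, which bounds this divergence by $\max\{2\alpha KL(F^{\calA}_k || F^{\calM_\alpha}_k),\, 2(1-\alpha)KL(F^{*}_k || F^{\calM_\alpha}_k)\}$; this is the elementary fact $\alpha a + (1-\alpha)b \le \max\{2\alpha a, 2(1-\alpha)b\}$ applied to the definition of $JS_\alpha$. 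Each $KL$ term is then controlled by the worst-case log-likelihood ratio over $d\in\mathcal D_k$, which by the definition of $\delta$ and the computation deferred to \pref{sec: generalized_JSD} is at most $2\alpha(1-\alpha)(e^{2\delta}-1)\,||P^{o}_k - P^{d}_k||_{\text{TV}}$. Taking $e$-th roots, summing over $k$, and dividing by $K$ then yields the first displayed inequality.

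For the refined bound I would convert the total-variation terms into utility loss via \pref{assump: defi_of_Delta}. Writing $\epsilon_{u,k}=U(P^{o},D_k)-U(P^{d},D_k)$ and using that the unprotected sharing attains the maximal utility $U(w^{*},D_k)$ on $\mathcal W^{*}_k$, we get $\epsilon_{u,k}=\mathbb E_{w\sim P^{d}}[U(w^{*},D_k)-U(w,D_k)]$. Discarding the nonnegative contribution from the near-optimal region $\calW_{\Delta_k}$ and noting that every $w$ outside $\calW_{\Delta_k}$ contributes at least $\Delta_k$, we obtain $\epsilon_{u,k}\ge \Delta_k\bigl(1-\int_{\calW_{\Delta_k}} p^{d}_{W_k}(w)\,dw\bigr)\ge \tfrac{\Delta_k}{2}\,||P^{o}_k - P^{d}_k||_{\text{TV}}$, where the last step uses the defining property $\int_{\calW_{\Delta_k}} p^{d}_{W_k}(w)\,dw\le \tfrac12||P^{d}_a - P^{o}_a||_{\text{TV}}$ together with $||\cdot||_{\text{TV}}\le 1$. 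This gives the per-client relation $||P^{o}_k - P^{d}_k||_{\text{TV}}\le 2\epsilon_{u,k}/\Delta_k$. Substituting this into the second term of the first inequality, factoring out the constant $[2\alpha(1-\alpha)(e^{2\delta}-1)]^{1/e}$, and recognizing $\tfrac1K\sum_k ||P^{o}_k-P^{d}_k||_{\text{TV}}^{1/e}=\gamma\,||P^{d}-P^{o}||_{\text{TV}}$ from the definition of $\gamma$ (with $\overline\Delta=\sum_k\Delta_k$) collapses the expression into $\overline\epsilon_p + 2\gamma[2\alpha(1-\alpha)(e^{2\delta}-1)]^{1/e}\epsilon_u/\overline\Delta$.

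I expect the main obstacle to be twofold. The technical heart is the deferred step that turns the worst-case log-ratio bound on the $KL$ terms into the clean factor $2\alpha(1-\alpha)(e^{2\delta}-1)\,||P^{o}_k-P^{d}_k||_{\text{TV}}$: this requires carefully relating the mixture density $f^{\calM_\alpha}_{D_k}$ to $f^{\calA}_{D_k}$ and $f^{*}_{D_k}$, and it is precisely because \pref{property: derive_using_AM-GM_inequality} fails for general $\alpha$ that one must bound $JS_\alpha$ directly rather than imitate the $\alpha=1/2$ argument. The second delicate point is the aggregation: passing from the per-client relation $||P^{o}_k-P^{d}_k||_{\text{TV}}\le 2\epsilon_{u,k}/\Delta_k$ to a statement in terms of the aggregate $||P^{d}-P^{o}||_{\text{TV}}$, $\overline\Delta$, and the averaged utility loss $\epsilon_u$ is exactly what $\gamma$ is defined to absorb, and keeping the direction of the majority-gap inequality correct—so that the complement of $\calW_{\Delta_k}$, not $\calW_{\Delta_k}$ itself, carries the mass—is where errors are most likely to creep in.
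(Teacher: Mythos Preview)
Your argument for the first displayed inequality is essentially the paper's: apply the $JS_\alpha^{1/e}$ triangle inequality (\pref{lem: triangle_inequality_mt}), identify $JS_\alpha(F^{\calA}_k\|F^{\calO}_k)^{1/e}=\epsilon_{p,k}$, and bound the cross term $JS_\alpha(F^{\calA}_k\|F^{*}_k)$ by $2\alpha(1-\alpha)(e^{2\delta}-1)\|P^o_k-P^d_k\|_{\text{TV}}$ via the KL-based route \pref{eq: general_beta_alpha_mt} and the computation in \pref{sec: generalized_JSD}. That part matches.

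The gap is in your utility step for the second inequality. You derive a \emph{per-client} relation $\|P^o_k-P^d_k\|_{\text{TV}}\le 2\epsilon_{u,k}/\Delta_k$ by passing through the crude chain $1-\int_{\calW_{\Delta_k}}p^d_{W_k}\ge\tfrac12\ge\tfrac12\|P^o_k-P^d_k\|_{\text{TV}}$, and then assert that together with the identity $\tfrac1K\sum_k\|P^o_k-P^d_k\|_{\text{TV}}^{1/e}=\gamma\,\|P^d-P^o\|_{\text{TV}}$ everything ``collapses'' to the stated bound. It does not. Once you invoke the definition of $\gamma$, what you must control is the \emph{aggregated} total variation $\|P^d-P^o\|_{\text{TV}}$ (i.e.\ $\|P^d_a-P^o_a\|_{\text{TV}}$) in terms of $\epsilon_u$ and $\overline\Delta$, and your per-client inequality says nothing about that quantity. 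Conversely, if you substitute the per-client bound directly into the sum you obtain $\tfrac1K\sum_k(2\epsilon_{u,k}/\Delta_k)^{1/e}$, which neither factors through $\gamma$ nor reduces to $2\gamma\epsilon_u/\overline\Delta$.

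The paper closes this gap differently (\pref{lem: total_variation-utility trade-off}). Rather than your $1-\int$ bound, it writes $\epsilon_{u,k}=U(P^o_a,D_k)-U(P^d_a,D_k)$ and decomposes the integral over $\mathcal U_k=\{w:dP^d_a>dP^o_a\}$ and $\mathcal V_k=\{w:dP^d_a\le dP^o_a\}$. Using $\mathcal V_k\subset\mathcal W^{*}_k$ so that the $\mathcal V_k$-contribution sits at maximal utility, one reads off $\epsilon_{u,k}\ge\Delta_k\bigl[\|P^d_a-P^o_a\|_{\text{TV}}-\int_{\calW_{\Delta_k}}p^d_{W_k}\bigr]\ge\tfrac{\Delta_k}{2}\|P^d_a-P^o_a\|_{\text{TV}}$, where the aggregated TV appears \emph{directly}. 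Averaging over $k$ yields the aggregated bound $\|P^d_a-P^o_a\|_{\text{TV}}\lesssim \epsilon_u/\overline\Delta$, which is precisely what is needed to multiply by $\gamma$ and conclude. The missing ingredient in your proposal is this $\mathcal U_k/\mathcal V_k$ decomposition tying utility loss to the aggregated, not the per-client, total variation.
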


This theorem establishes a quantitative relationship between the privacy leakage measured by the $\alpha$-skewed Jensen-Shannon divergence and the utility loss. It shows that the privacy leakage can be bounded with a function depending on the utility loss, the total variation distance across the protected and unprotected model distributions, and the skewness parameter $\alpha$. The implication is that reducing the utility loss and the difference across the safeguarded and unguarded model distributions can help limit the privacy leakage.

Please refer to \pref{sec: generalized_JSD} for detailed analysis.

\subsection{Measuring Privacy Leakage Using Total Variation Distance} \label{subsec:thm-tv}
In this scenario, the privacy disclosure of any information $w$ is  identified as $\epsilon_{p, w} = |f_{D_k|W_k}(d|w) - f_{D_k}(d)|$. The average privacy disclosure is then calculated by taking average over all possible assignments of $d$ and $w$.
\begin{definition}[Bayesian Privacy Leakage with Total Variation Distance]\label{defi: average_privacy_dtv}
The average privacy leakage is defined as
\begin{align}
    \epsilon_{p, k} = \int_{d\in\mathcal D_k}p_{D_k}(d)|f^{\calA}_{D_k}(d) - f_{D_k}(d)|d\mu(d),
\end{align}
where $f^{\calA}_{D_k}(d) = \int_{\mathcal{W}_k} f_{D_k|W_k}(d|w)dP^d_k(w)$.
\end{definition}

This definition quantifies the privacy leakage with the total variation distance across the attacker's faith distribution with and without viewing the safeguarded information, averaged over all possible private data samples.

\begin{thm}[Quantitative Balance between Privacy and Utility II]\label{thm: utility-privacy trade-off_TVD_mt}
Assuming \pref{assump: defi_of_Delta} is correct, $\epsilon_{u}$ and $\epsilon_{p, k}$ have been identified in \pref{defi: utility_loss} and \pref{defi: average_privacy_dtv}. Then 
\begin{align*}
    \frac{1}{K}\sum_{k=1}^K ||F^{*}_k - F^{\calO}_k||_{\text{TV}}\le \frac{1}{K}\sum_{k=1}^K \epsilon_{p, k} +  \frac{1}{K}\sum_{k=1}^K 2\delta||P^{o}_k - P^d_k||_{\text{TV}},
\end{align*}
where $\delta = \max_{w\in \mathcal{W}_k, d \in \mathcal D_k} |f_{D_k|W_k}(d|w) - f_{D_k}(d)|$ symbolizes the maximum privacy disclosure over all possible information $w$ released by client $k$, which is a constant unaffected by the safeguarding method. \\
Furthermore, based on \pref{assump: defi_of_Delta}, we have that
\begin{align*}
    \frac{1}{K}\sum_{k=1}^K \sqrt{JS(F^{*}_k || F^{\calO}_k)} \le\overline\epsilon_{p} + \frac{\gamma}{4\overline\Delta}(e^{2\delta}-1)\epsilon_u,
\end{align*}
where $\gamma = \frac{\frac{1}{K}\sum_{k=1}^K ||P^{o}_k - P^d_k||_{\text{TV}}}{||P^d_a - P^{o}_a||_{\text{TV}}}$, $\overline\Delta = \sum_{k=1}^K \Delta_k$, and $\overline\epsilon_{p} = \frac{1}{K}\sum_{k=1}^K \epsilon_{p, k}$.
\end{thm}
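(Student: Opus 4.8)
The plan is to prove the two displayed inequalities separately, using the first (total-variation) bound as a template and then adding a majority-gap reduction to utility loss for the second ($\sqrt{JS}$) bound.

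For the first inequality I would start from the triangle inequality for total variation applied to the three belief distributions,
\[
\|F^{*}_k - F^{\calO}_k\|_{\text{TV}} \le \|F^{\calA}_k - F^{\calO}_k\|_{\text{TV}} + \|F^{\calA}_k - F^{*}_k\|_{\text{TV}}.
\]
The first right-hand term is exactly $\epsilon_{p,k}$ by \pref{defi: average_privacy_dtv}, since $F^{\calO}_k$ is the prior with density $f_{D_k}$ and $F^{\calA}_k$ the posterior with density $f^{\calA}_{D_k}$. For the second term I would expand $f^{\calA}_{D_k}(d)=\int f_{D_k|W_k}(d\mid w)\,dP^{d}_k(w)$ and $f^{*}_{D_k}(d)=\int f_{D_k|W_k}(d\mid w)\,dP^{o}_k(w)$, and then subtract the prior $f_{D_k}(d)$ inside the integral; this is legitimate because $P^d_k$ and $P^o_k$ are both probability measures, so the added term integrates to zero. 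This gives $f^{\calA}_{D_k}(d)-f^{*}_{D_k}(d)=\int [f_{D_k|W_k}(d\mid w)-f_{D_k}(d)]\,(dP^{d}_k-dP^{o}_k)(w)$, and bounding the bracket by $\delta$ while using $\int |dP^d_k-dP^o_k| = 2\|P^d_k-P^o_k\|_{\text{TV}}$ yields $\|F^{\calA}_k-F^{*}_k\|_{\text{TV}}\le 2\delta\|P^o_k-P^d_k\|_{\text{TV}}$. Averaging over $k$ produces the first stated inequality.

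For the second inequality I would follow the same skeleton but with the square-root Jensen--Shannon metric, which is a genuine metric and so obeys the triangle inequality of \pref{property: TriangleiIneq}. The privacy term $\sqrt{JS(F^{\calA}_k\|F^{\calO}_k)}$ is absorbed into $\overline\epsilon_p$, and the distortion term $\sqrt{JS(F^{\calA}_k\|F^{*}_k)}$ is controlled by the generalized Jensen--Shannon estimate established in the preceding subsection specialized to $\alpha=1/2$, which trades the bracket bound for the factor $(e^{2\delta}-1)$ against $\|P^o_k-P^d_k\|_{\text{TV}}$. The genuinely new step is to convert this distortion distance into utility loss: invoking \pref{assump: defi_of_Delta} together with the defining inequality of the majority gap, $\int_{\calW_{\Delta_k}}p^d_{W_k}(w)\,dw\le \tfrac12\|P^d_a-P^o_a\|_{\text{TV}}$, and the fact that mass placed on parameters whose utility lies more than $\Delta_k$ below optimal contributes at least $\Delta_k$ per unit mass to $\epsilon_{u,k}$, I would bound the aggregate model-distribution distance by $\epsilon_u/\overline\Delta$. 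Folding the normalization into $\gamma$ and collecting constants into $\tfrac14$ then gives the claimed bound.

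The hard part will be this last reduction via the majority gap. The inequality relating $\|P^o_k-P^d_k\|_{\text{TV}}$ to the utility loss runs in a delicate direction, since na\"ively the utility decomposition only \emph{lower}-bounds the distance; the argument must therefore use the majority-gap inequality (which caps the protected mass on near-optimal parameters) and the utility decomposition in tandem, and then aggregate carefully so that the per-client gaps combine into $\overline\Delta=\sum_k\Delta_k$ and the per-client distances combine into the ratio $\gamma$. Keeping the two privacy metrics consistent at the junction — the left-hand side is measured in $\sqrt{JS}$ while $\epsilon_{p,k}$ is defined through total variation — is the other place requiring care, which I would handle with the standard comparison inequalities between $JS$ divergence and total variation distance.
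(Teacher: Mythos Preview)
Your handling of the first inequality matches the paper's. The paper establishes $\|F^{\calA}_k-F^{*}_k\|_{\text{TV}}\le 2\delta\|P^o_k-P^d_k\|_{\text{TV}}$ as a separate lemma by splitting $\mathcal W_k$ into the sets where $dP^d_k-dP^o_k$ is positive or nonpositive and bounding $\sup_w f_{D_k|W_k}(d\mid w)-\inf_w f_{D_k|W_k}(d\mid w)\le 2\delta$; your subtraction-of-prior trick is a slight streamlining of the same argument. Both you and the paper then apply the TV triangle inequality and average over $k$.

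For the second display you are working harder than the paper does, and the reason is worth flagging. The paper's appendix never touches $\sqrt{JS}$ in this proof: it simply takes the first (TV) inequality and substitutes its utility lemma
\[
\epsilon_u\;\ge\;\frac{1}{2K}\sum_{k=1}^K\Delta_k\,\|P^d_a-P^o_a\|_{\text{TV}}
\;=\;\frac{\overline\Delta}{2K}\,\|P^d_a-P^o_a\|_{\text{TV}},
\]
together with the definition of $\gamma$ to rewrite $\frac{1}{K}\sum_k\|P^o_k-P^d_k\|_{\text{TV}}=\gamma\,\|P^d_a-P^o_a\|_{\text{TV}}$. What is actually derived in the appendix is therefore a TV bound with coefficient $4\gamma\delta/\overline\Delta$; the $\sqrt{JS}$ on the left and the factor $(e^{2\delta}-1)/4$ in the theorem display appear to be carried over from the $\alpha$-JS theorem and are not what the proof produces. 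Your more elaborate plan (the $\sqrt{JS}$ triangle inequality, the $\alpha=1/2$ estimate from the preceding subsection, and then a JS--TV comparison to reconcile the metric defining $\epsilon_{p,k}$) is an attempt to honour the display literally; the paper performs no such conversion.

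One place you overstate the difficulty: the direction of the utility inequality is not delicate. The majority-gap lemma already yields $\epsilon_u\ge c\cdot\|P^d_a-P^o_a\|_{\text{TV}}$, which is precisely an \emph{upper} bound on the aggregate model distance in terms of $\epsilon_u$ --- the direction you need. There is no need to run the utility decomposition and the majority-gap condition ``in tandem''; that lemma is proved once and then simply substituted.
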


This theorem establishes a quantitative relationship between the privacy leakage measured with the total variation distance and the utility loss. It shows that the privacy disclosure can be confined by a function of the utility loss and the total variation distance across the safeguarded and unguarded model distributions. The implication is that reducing the utility loss and the difference across the protected and unprotected model distributions can help limit the privacy leakage.

Kindly consult \pref{sec:tvd PL} for a comprehensive analysis.

Both \pref{thm: utility-privacy trade-off_GJSD_mt} and \pref{thm: utility-privacy trade-off_TVD_mt} provide valuable insights into the inherent trade-off of privacy-utility within federated learning. They highlight the importance of designing protection mechanisms that can minimize the utility loss while maintaining a sufficient level of privacy protection. The theorems also suggest that the choice of privacy leakage measure ($\alpha$-skewed Jensen-Shannon divergence or total variation distance) can influence the quantitative relationship across privacy and utility.

\section{Trade-off Between Data Privacy and Model Utility}

\subsection{Measurements for Data Privacy and Model Utility}

We adopt the definition of privacy leakage from \cite{zhang2023game}, which quantifies the amount of private information exposed to the attacker. Let $\breve{d}^{(m)}$ denote the original $m$-th data sample, and $d_i^{(m)}$ represent the attacker's inferred $m$-th data sample at iteration $i$. The quantity of learning rounds is denoted by $I$. The privacy leakage $\epsilon_p$ is identified as:

\begin{equation}\label{eq: privacy_leakage}
\epsilon_p=\left\{
\begin{array}{cl}
\frac{D - \frac{1}{I}\sum_{i = 1}^{I} \frac{1}{|\mathcal{D}|}\sum_{m = 1}^{|\mathcal{D}|}\|d_i^{(m)} - \breve{d}^{(m)}\|}{D}, & I > 0\cr
0, & I = 0\cr
\end{array} \right.
\end{equation}
where $D$ is a positive constant that represents the maximum possible distance between the original and inferred data samples. Note that $\epsilon_p \in [0,1]$, with $\epsilon_p = 1$ indicating that the attacker has perfectly inferred the original data, and $\epsilon_p = 0$ indicating that no privacy leakage has occurred.

%\red{summary the metrics used in federated learning for evaluating privacy leakage}

%\subsection{Utility Loss}

The utility loss measures the degradation in model performance due to the privacy-preserving mechanism. Let $f(\para;\cdot)$ denote the model parameterized by $\theta$, and signify the loss value with $\mathcal{L}(\cdot, \cdot)$. The utility loss $\epsilon_u$ is defined as:

\begin{equation}\label{eq: utility_loss}
\epsilon_u = \mathbb{E}_{(x, y) \sim \mathcal{D}} [\mathcal{L}(f(\para; \tilde{x} ), y) - \mathcal{L}(f(\para; x ), y)],
\end{equation}
where $\tilde{x}$ represents the distorted version of the input $x$ obtained by applying the privacy-preserving mechanism. The utility loss quantifies the difference in the expected loss between the original and distorted inputs, with $\epsilon_u = 0$ indicating no loss in utility, and larger values of $\epsilon_u$ indicating greater degradation in model performance.

%\red{indicate that utility here represents various metrics that balances privacy leakage}

The privacy leakage and utility loss metrics provide a quantitative way to measure the trade-off of privacy-utility within federated learning systems. The goal of privacy-preserving federated learning is to design mechanisms that minimize the privacy disclosure while keeping the utility cost within an tolerable range. In the following sections, we will introduce our learn-to-distort framework, which provides a principled approach for navigating the trade-off of privacy-utility by explicitly modeling the distortion as a learnable variable and optimizing it jointly with the model parameters.

\subsection{Theoretical Connections between Privacy-Utility Trade-off and \textit{Learn-to-Distort-Data} Problem}\label{subsec: theoretical_connections}

In this section, we establish the theoretical links between the trade-off of privacy-utility problem and the \textit{Learn-to-Distort-Data} problem. We first introduce the optimization problem for the trade-off of privacy-utility about federated learning and show how it can be reduced to the \textit{Learn-to-Distort-Data} problem. We then deduce a fundamental relationship between the privacy leakage and the distortion extent, which allows us to transform the constraint on privacy leakage into a constraint on distortion extent.

\subsubsection{The Trade-off of Privacy-utility Problem}

The trade-off of privacy-utility problem about federated learning can be formulated as the following limited optimisation issue:
\begin{align}
\begin{array}{rl}
\displaystyle \min_{\theta}&\calL(f(\theta; x + \delta), y) \\
\text{s.t.} &\epsilon_p\le\epsilon.
\end{array}
\end{align}
% \begin{array}{r@{\quad}l@{}l@{\quad}l}
% \quad\min_{\theta}\calL(f(\para; x + \delta), y),\cr
% \quad\min_{\theta}&\min_{\delta}&\min_{\lambda}[{L}_{train}(f(\theta;x+\delta ),y)+\lambda{L}_{test}(f(\theta ;{x}_{test}),{y}_{test}],\cr
% \end{array}

{Here, $x$ represents the input data, $\delta$ denotes the distortion applied to the data, $\para$ represents the model parameters, $f(\cdot)$ symbolizes the model function, $y$ means the matching label, and $\calL(\cdot)$ denotes the loss. The constraint $\epsilon_p\le\epsilon$ ensures that the privacy disclosure does not goes beyond a certain limit $\epsilon$ defined in advance.

% Here, $x$ stands for training data, $\delta$ denotes the distortion applied to the training data, $\para$ represents the model parameters,
% ${L}_{train}$ is the loss of training data from the model,${L}_{train}$ represents the model's loss in test data,$\lambda$ stands for a weight parameter that balances the effects of training losses and test losses,$f(\cdot)$ is the model function, $y$ is the corresponding label, and $\calL(\cdot)$ denotes the loss function.The constraint $\epsilon_p\le\epsilon$ ensures that the privacy leakage does not exceed a predefined threshold $\epsilon$.

In this optimization problem, $\delta$  is treated as an unlearnable variable. The objective of this optimization problem is to seek the ideal model parameters $\para$ that reduce the loss function while ensuring that the privacy leakage remains within an acceptable threshold. This formulation captures the inherent trade-off of privacy-utility in federated learning systems, where stronger privacy guarantees often come at the expense of reduced model utility.

\subsubsection{Reduction to \textit{Learn-to-Distort-Data} Problem}

To navigate the trade-off of privacy-utility, we propose to reduce the constrained optimization problem to a \textit{Learn-to-Distort-Data} problem. The key idea is to explicitly model the distortion $\delta$ as a learnable variable and optimize it jointly with the model parameters $\para$ to reach the ideal equilibrium across privacy and utility. The \textit{Learn-to-Distort-Data} problem can be formulated as follows:
\begin{align}
\begin{array}{r@{\quad}l@{}l@{\quad}l}
\quad\min_{\theta}&\min_{\delta}\calL(f(\para; x + \delta), y),\cr
\text{s.t.,} & \|\delta\|\ge\epsilon_1.
\end{array}
\end{align}

Here, $\epsilon_1$ is a predefined threshold that determines the minimum magnitude of the distortion required to provide the required level of privacy protection. The constraint $\|\delta\|\ge\epsilon_1$ ensures that the distortion is sufficiently large to achieve the desired level of privacy.

The optimization problem can be related to error-minimization
attacking for unlearnable example. In the \textit{Learn-to-Distort-Data} problem, the distortion is learnable and dynamically optimized, and the objective is to seek a harmonious equilibrium across safeguarding privacy and maximising the usefulness of the model while optimizing not only the privacy protection but also the model utility. In the error-minimization attacking for unlearnable example, the distortion is usually small and the objective of the attacker is to induce the model to provide incorrect forecasts.

To establish the connection between the trade-off of privacy-utility problem and the \textit{Learn-to-Distort-Data} problem, we need to deduce a connection between the privacy leakage $\epsilon_p$ and the distortion extent $\|\delta\|$.

\subsubsection{The Upper Threshold for Privacy Disclosure}

This section present a key theorem that establishes the correlation across the privacy leakage and the the level of distortion from the perspective of a semi-honest adversary in federated learning. The high-level thought of the proof is to first derive bounds on the cumulative regret of the adversary's optimization algorithm and the distance between data samples. Then, we use these bounds to establish a connection between the privacy leakage and the distortion extent. Finally, we prove the main theorem by leveraging the derived relationship.

\begin{thm}\label{thm: privacy_leakage_distortion_extent}
Consider a semi-honest adversary who employs an optimization strategy to conclude the client $k$'s initial data on the basis of the exposed model parameter $\theta$. Let $d$ denote the inferred data satisfying $g(d) = \theta$, and $\breve d$ represent the initial data satisfying $g(\breve d) = \breve \theta$, where $\breve \theta$ denotes the initial model parameter, $\theta$ denotes the protected model parameter, and $g(d) = \frac{\partial \mathcal{L} (\theta, d)}{\partial \theta}$. Let $d^{(m)}$ and $\breve d^{(m)}$ represent the $m$-th data sample of datasets $d$ and $\breve d$, severally. Define the distortion extent of the data as $\Delta = \left\|\frac{1}{n_k}\sum_{i = 1}^{n_k} d^{(i)} - \frac{1}{n_k}\sum_{i = 1}^{n_k} \breve d^{(i)}\right\|$. Assume the expected regret of the optimization algorithm over $I$ ($I > 0$) rounds is $\Theta(I^p)$.
If $\Delta\ge{2c_2 c_b}\cdot I^{p-1}$, then the following inequality holds:
\begin{align}
\epsilon_p \le 1 - \frac{\Delta + c_2\cdot c_b\cdot I^{p-1}}{4D}.
\end{align}
\end{thm}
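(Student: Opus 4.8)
The plan is to reduce the claimed inequality to a lower bound on the averaged distance between the adversary's iterates and the true data, and then to control that distance by the distortion extent $\Delta$ together with the average regret. First I would rewrite the leakage: since for $I>0$ we have
\[
\epsilon_p = 1 - \frac{1}{DI}\sum_{i=1}^{I}\frac{1}{|\mathcal{D}|}\sum_{m=1}^{|\mathcal{D}|}\bigl\|d_i^{(m)} - \breve d^{(m)}\bigr\|,
\]
proving $\epsilon_p \le 1 - \frac{\Delta + c_2 c_b I^{p-1}}{4D}$ is \emph{equivalent} to the displacement lower bound
\[
\frac{1}{I}\sum_{i=1}^{I}\frac{1}{|\mathcal{D}|}\sum_{m=1}^{|\mathcal{D}|}\bigl\|d_i^{(m)} - \breve d^{(m)}\bigr\| \;\ge\; \frac{\Delta + c_2 c_b I^{p-1}}{4}.
\]
Everything then reduces to bounding this quantity from below.

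The main structural step is to pass from individual samples to their empirical means. Writing $\overline{d}_i = \frac{1}{n_k}\sum_m d_i^{(m)}$, $\overline{d} = \frac{1}{n_k}\sum_m d^{(m)}$ and $\overline{\breve d} = \frac{1}{n_k}\sum_m \breve d^{(m)}$ (with $|\mathcal{D}| = n_k$), convexity of the norm applied to the average gives $\frac{1}{n_k}\sum_m \|d_i^{(m)} - \breve d^{(m)}\| \ge \|\overline{d}_i - \overline{\breve d}\|$, so the per-sample displacement dominates the displacement of the means. A second (reverse) triangle inequality around the optimal inferred mean $\overline{d}$ then yields $\|\overline{d}_i - \overline{\breve d}\| \ge \|\overline{d} - \overline{\breve d}\| - \|\overline{d}_i - \overline{d}\| = \Delta - \|\overline{d}_i - \overline{d}\|$, using the definition of the distortion extent. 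Averaging over the $I$ rounds reduces the whole problem to controlling $\frac{1}{I}\sum_{i=1}^I \|\overline{d}_i - \overline{d}\|$, the average gap between the running iterates and the optimum $d$ (the minimizer of $\|g(\cdot)-\theta\|^2$ satisfying $g(d)=\theta$).

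This last bound is where the regret hypothesis enters, and I expect it to be the crux. Since the adversary's algorithm has expected regret $\Theta(I^p)$, its \emph{average} regret is $\Theta(I^{p-1})$; translating this from the parameter space, where regret is measured against the target $g(d)=\theta$, to the data space through the Lipschitz-type constant $c_b$ relating $\|g(d)-g(d')\|$ to $\|d-d'\|$, and absorbing the remaining proportionality into $c_2$, I would obtain $\frac{1}{I}\sum_{i=1}^I \|\overline{d}_i - \overline{d}\| \le c_2 c_b I^{p-1}$. Making this regret-to-iterate-distance conversion rigorous — i.e.\ justifying that sublinear regret forces the averaged iterates to concentrate around $d$ at rate $I^{p-1}$ with exactly these constants — is the delicate part and will rely on the regularity assumptions on $g$ developed in the appendix.

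Finally I would assemble the pieces. Combining the two steps gives
\[
\frac{1}{I}\sum_{i=1}^{I}\frac{1}{|\mathcal{D}|}\sum_{m=1}^{|\mathcal{D}|}\bigl\|d_i^{(m)} - \breve d^{(m)}\bigr\| \;\ge\; \Delta - c_2 c_b I^{p-1}.
\]
Writing $B = c_2 c_b I^{p-1}$, the hypothesis $\Delta \ge 2B$ gives $3\Delta \ge 6B \ge 5B$, hence $\Delta - B \ge \frac{\Delta + B}{4}$. Substituting back into the rewritten leakage yields $\epsilon_p \le 1 - \frac{\Delta - B}{D} \le 1 - \frac{\Delta + B}{4D}$, which is exactly the claimed bound. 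The threshold condition $\Delta \ge 2c_2 c_b I^{p-1}$ is precisely what keeps the displacement lower bound positive and validates the (looser but cleaner) factor-$\tfrac14$ form of the inequality.
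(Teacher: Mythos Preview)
Your proposal is correct and follows essentially the same route as the paper: rewrite $\epsilon_p$ as a lower bound on the averaged reconstruction distance, split via the triangle inequality into the distortion extent $\Delta$ minus a term controlled by the regret bound $c_2 I^{p}$ pushed through the bi-Lipschitz constant $c_b$, and then use the hypothesis $\Delta \ge 2c_2 c_b I^{p-1}$ to recover the factor $\tfrac14$. The only cosmetic difference is that you apply convexity (passing to means) before the triangle inequality, whereas the paper applies the triangle inequality per sample and then uses convexity on the $\|d^{(m)}-\breve d^{(m)}\|$ term; both orderings give the same bound, and your ``delicate part'' is exactly the paper's Lemma on data distance bounds ($\|x-\tilde x\|\le c_b\|g(x)-g(\tilde x)\|$) combined with the regret lemma.
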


This theorem provides an upper threshold on the privacy disclosure $\epsilon_p$ in terms of the distortion extent $\Delta$ and the number of iterations $I$ of the adversary's optimization algorithm. It shows that if the distortion extent is sufficiently large relative to the number of iterations, the privacy leakage can be effectively bounded. This result has important implications for the devise of privacy-preserving mechanisms in federated learning, as it suggests that introducing an appropriate level of distortion to the data can help mitigate the risk of privacy leakage.

The proof of \pref{thm: privacy_leakage_distortion_extent} leverages the correlation across the privacy leakage and the level of distortion established in \pref{lem: privacy_leakage_distortion_relationship}. By using the lower bound on $D(1-\epsilon_p)$ in terms of $\Delta$ and $I$, we can deduce an upper threshold on the privacy disclosure $\epsilon_p$. This result provides a theoretical foundation for the devise of privacy-preserving mechanisms in federated learning that aim to reach an appropriate degree of privacy protection by introducing an appropriate amount of distortion to the data.

\pref{thm: privacy_leakage_distortion_extent} states a fundamental link across the privacy leakage $\epsilon_p$ and the distortion extent $\Delta$. Intuitively, it shows the privacy leakage can be confined by a function of the distortion extent and the quantity of learning rounds. This relationship permits us to transform the constraint on privacy leakage into a constraint on distortion extent, which is more amenable to optimization. On the basis of \pref{thm: privacy_leakage_distortion_extent}, we are able to derive the next theorem, which establishes the equivalence between the trade-off of privacy-utility problem and the \textit{Learn-to-Distort-Data} problem:

\begin{thm}[Reduction of The Trade-off of Privacy-Utility to \textit{Learn-to-Distort-Data} Problem]\label{thm: reductionprivacy_utility_learn_to_distort}
Let $c = \frac{c_2\cdot c_b\cdot I^{p-1}}{4D}$, where $\epsilon_p$ represents the privacy leakage, $I$ represents the total number of learning rounds, and $c_2, c_b, D$ are constants. Define $\epsilon_1 = 4D \cdot (1 - c - \epsilon)$. The privacy-utility trade-off problem:
\begin{align}
\begin{array}{rl}
\displaystyle \min_{\theta}&\calL(f(\theta; x + \delta), y) \\
\text{s.t.} &\epsilon_p\le\epsilon.
\end{array}
\end{align}
can be reduced to the following \textit{Learn-to-Distort-Data} problem:

\begin{align}
\begin{array}{r@{\quad}l@{}l@{\quad}l}
\quad\min_{\theta}&\min_{\delta}\calL(f(\para; x + \delta), y),\cr
\text{s.t.,} & \|\delta\|\ge\epsilon_1.
\end{array}
\end{align}
\end{thm}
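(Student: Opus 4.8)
The plan is to show that the magnitude constraint $\|\delta\|\ge\epsilon_1$ of the \textit{Learn-to-Distort-Data} problem is exactly the reformulation of the privacy constraint $\epsilon_p\le\epsilon$ obtained by inverting the bound of \pref{thm: privacy_leakage_distortion_extent}. Since both problems carry the identical objective $\min_{\theta}\mathcal{L}(f(\theta; x+\delta), y)$, it suffices to argue that the two feasible regions correspond: any distortion whose extent satisfies $\|\delta\|\ge\epsilon_1$ forces $\epsilon_p\le\epsilon$, so that feasibility for the distortion-constrained problem guarantees feasibility for the privacy-constrained problem. The reduction is then purely a matter of re-expressing the constraint set.

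First I would identify the distortion extent $\Delta = \|\frac{1}{n_k}\sum_{i} d^{(i)} - \frac{1}{n_k}\sum_{i}\breve d^{(i)}\|$ of \pref{thm: privacy_leakage_distortion_extent} with the norm $\|\delta\|$ of the defender's applied distortion, since $\delta$ is precisely the displacement between the original and protected data and the attacker's inferred data $d$ (satisfying $g(d)=\theta$) plays the role of the protected sample. Substituting $c = \frac{c_2 c_b I^{p-1}}{4D}$ into the conclusion of \pref{thm: privacy_leakage_distortion_extent} rewrites the privacy bound as $\epsilon_p\le 1 - \frac{\|\delta\|}{4D} - c$. Imposing the target $\epsilon_p\le\epsilon$ and solving for $\|\delta\|$ yields $\|\delta\|\ge 4D(1-\epsilon-c)=\epsilon_1$, which is exactly the constraint of the \textit{Learn-to-Distort-Data} problem. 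Thus whenever $\|\delta\|\ge\epsilon_1$, the chain $\epsilon_p\le 1 - \frac{\|\delta\|}{4D} - c\le 1-(1-\epsilon-c)-c=\epsilon$ holds, establishing the claimed containment of feasible sets.

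The main obstacle I anticipate is twofold. First, \pref{thm: privacy_leakage_distortion_extent} carries the precondition $\Delta\ge 2c_2 c_b I^{p-1}=8Dc$ before its bound may be invoked, so I must verify that $\|\delta\|\ge\epsilon_1$ actually implies it; unwinding the definition of $\epsilon_1$ reduces this to the parameter condition $1-\epsilon\ge 3c$, which I would record as a standing assumption on the regime of $\epsilon$, $D$, and the regret exponent $p$. Second, the identification $\Delta=\|\delta\|$ is the genuinely substantive step: one must confirm that the defender's perturbation of the input indeed equals, or at least lower bounds, the averaged displacement between the attacker's inferred data and the original data that defines $\Delta$. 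If only an inequality between $\Delta$ and $\|\delta\|$ is available, its direction must be tracked carefully so that the sufficient-condition argument is preserved. Once these two points are settled, the reduction follows immediately, because the objective functions of the two problems coincide and only the constraint sets have been re-expressed.
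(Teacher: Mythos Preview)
Your proposal is correct and follows essentially the same route as the paper: invoke \pref{thm: privacy_leakage_distortion_extent}, identify $\Delta$ with $\|\delta\|$, and invert the bound $\epsilon_p\le 1-\frac{\Delta}{4D}-c$ to obtain the threshold $\epsilon_1=4D(1-c-\epsilon)$. If anything you are more careful than the paper, which silently takes the identification $\Delta=\|\delta\|$ for granted and does not discuss the precondition $\Delta\ge 2c_2 c_b I^{p-1}$ at all.
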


\begin{proof}
This theorem's proof leverages the relationship established in \pref{thm: privacy_leakage_distortion_extent}. By setting $\Delta\ge 4D \cdot (1 - c - \epsilon) = \epsilon_1$, we ensure that the privacy leakage restriction $\epsilon_p\le\epsilon$ is satisfied. Consequently, the constraint on the distortion extent $\|\delta\| = \|\distdata - \origdata\|\ge\epsilon_1$ in the \textit{Learn-to-Distort-Data} problem is equivalent to the confine on privacy disclosure $\epsilon_p\le\epsilon$ in the trade-off of privacy-utility problem.
\end{proof}

This theorem establishes a direct connection between the trade-off of privacy-utility problem and the \textit{Learn-to-Distort-Data} problem. By formulating the privacy-preserving federated learning problem as a \textit{Learn-to-Distort-Data} problem, we can leverage the techniques and insights from adversarial training, input-robustness, and unlearnable examples to design effective privacy-preserving mechanisms that reach the ideal equilibrium across privacy and utility.

\section{The \textit{Learn-to-Distort-Data} Framework for Privacy-Preserving Federated Learning}\label{sec: learn_to_distort}
The \textit{Learn-to-Distort-Data} framework provides a unified perspective on privacy-preserving mechanisms in federated learning that protect privacy by distorting the data. These mechanisms aim to optimize the model performance while ensuring that the privacy leakage remains within an acceptable threshold. The framework formulates the privacy-preserving federated learning problem as a constrained optimization problem:
\begin{align}
\begin{array}{r@{\quad}l@{}l@{\quad}l}
\quad\min_{\theta}&\min_{\delta}\calL(f(\theta; x + \delta), y),\cr
\text{s.t.,} & \|\delta\|\ge\epsilon_1.
\end{array}
\end{align}
Here, $\theta$ symbolizes the model parameters, $x$ denotes the input data, $\delta$ represents the distortion introduced by the privacy-preserving mechanism, $y$ means the matching label, and $\calL(\cdot)$ denotes the loss. The constraint $\|\delta\|\ge\epsilon_1$ ensures that the privacy disclosure does not goes beyond a certain limit $\epsilon$ defined in advance. The \textit{Learn-to-Distort-Data} framework captures a wide range of privacy-preserving mechanisms that distort the data to protect privacy in federated learning. By properly designing the distortion variable $\delta$ and the loss function $\calL(\cdot)$, the framework can optimize the model performance while maintaining the desired level of privacy protection. Here are some examples of privacy-preserving mechanisms that can be represented within the \textit{Learn-to-Distort-Data} framework:

\textbf{Differential Privacy (DP):} Differential privacy \cite{dwork2006calibrating, abadi2016deep} is a widely used privacy-preserving mechanism that introduces carefully calibrated noise to the data or model modifications to protect the confidentiality of personal data samples. Within the \textit{Learn-to-Distort-Data} framework, the distortion variable $\delta$ can be designed to represent the noise introduced by the DP mechanism. The optimization problem for DP-based federated learning can be formulated as:
\begin{align}
\begin{array}{r@{\quad}l@{}l@{\quad}l}
\quad\min_{\theta}&\min_{\sigma^2}\E_{\delta \sim \mathcal{N}(0, \sigma^2)}\calL(f(\theta; x + \delta), y),\cr
\text{s.t.,} & \sigma^2\in\calA.
\end{array}
\end{align}
Here, $\calA$ represents the set of valid noise variance values that satisfy the privacy constraint. The optimization problem aims to find the ideal model parameters $\theta$ and the noise variance $\sigma^2$ that minimize the expected loss function over the noise distribution while preserving the data's confidentiality.

\textbf{Secure Multiparty Computation (MPC):} Secure multiparty computation \cite{bonawitz2017practical} permits numerous clients to cooperatively calculate a function over their own data without disclosing the inputs to others. In the \textit{Learn-to-Distort-Data} framework, the distortion variable $\delta$ can be designed to represent the cryptographic primitives used in MPC, such as secret sharing or homomorphic encryption. The optimization problem for MPC-based federated learning can be formulated as:
\begin{align}
\begin{array}{r@{\quad}l@{}l@{\quad}l}
\quad\min_{\theta}&\min_{\delta}\calL(f(\theta; x + \delta), y),\cr
\text{s.t.,} & \delta \in \mathcal{C}.
\end{array}
\end{align}
Here, $\mathcal{C}$ denotes the set of valid cryptographic primitives that can be used in MPC. The optimization problem aims to find the ideal model parameters $\theta$ and the cryptographic primitives $\delta$ that minimize the loss function while ensuring the privacy of the data.

\textbf{Homomorphic Encryption (HE):} Homomorphic encryption enables computations to be carried out directly on data with noise, negating the demand for decryption. In the \textit{Learn-to-Distort-Data} framework, the distortion variable $\delta$ can be designed to represent the encryption scheme used in HE. The optimization problem for HE-based federated learning can be formulated as:
\begin{align}
\begin{array}{r@{\quad}l@{}l@{\quad}l}
\quad\min_{\theta}&\min_{\delta}\calL(f(\theta; x + \delta), y),\cr
\text{s.t.,} & \delta \in \mathcal{E}.
\end{array}
\end{align}
Here, $\mathcal{E}$ denotes the set of valid encryption schemes that can be used in HE. The optimization problem aims to find the ideal model parameters $\theta$ and the encryption scheme $\delta$ that minimize the loss function while preserving the privacy of the data.

\textbf{Data Compression:} Data compression strategies, like quantization \cite{reisizadeh2020fedpaq,shlezinger2020uveqfed} and sparsification \cite{han2020adaptive, panda2022sparsefed}, can be used to reduce the communication overhead and defend the confidentiality of the clients' data in federated learning. These techniques introduce distortion to the data by compressing or truncating model modifications prior to sharing them with the server. In the \textit{Learn-to-Distort-Data} framework, the distortion variable $\delta$ can be designed to represent the compression or truncation operation. The optimization problem for data compression-based federated learning can be formulated as:
\begin{align}
\begin{array}{r@{\quad}l@{}l@{\quad}l}
\quad\min_{\theta}&\min_{\delta}\calL(f(\theta; x + \delta), y),\cr
\text{s.t.,} & \delta\in\mathcal{T}.
\end{array}
\end{align}
The constraint $\delta\in\mathcal{T}$ ensures that the distortion is sufficiently large to provide the desired level of privacy protection. The optimization problem aims to find the best model parameters $\theta$ and the compression or truncation strategy $\delta$ that minimize the loss function while maintaining the desired level of privacy protection.
By unifying these privacy-preserving mechanisms under a common optimization problem, the \textit{Learn-to-Distort-Data} framework provides a principled way to study the trade-off of privacy-utility in federated learning. The framework allows for the advancement of optimal privacy-preserving mechanisms which reach a equilibrium across privacy protection and model performance, without compromising the privacy constraints. The flexibility and generality of the \textit{Learn-to-Distort-Data} framework make it a powerful tool for analyzing and developing privacy-preserving solutions in federated learning systems.

\section{Connections to Related Areas}\label{sec: related_areas}

In this section, we show the connections between the \textit{Learn-to-Distort-Data} formulation and related areas, and highlight how these connections can be leveraged to design effective algorithms for the trade-off of privacy-utility within federated learning.

\subsection{Connection to Adversarial Training}

Adversarial training (\cite{goodfellow2014explaining, madry2017towards}) is the technique that aims at enhancing the resilience of machine learning models against adversarial attacks. The central idea is to expand the training data with adversarially perturbed examples and train the model to accurately categorize these samples. The optimization problem in adversarial training is typically expressed as a min-max issue, where the inner maximization focuses on finding the perturbation that maximizes the loss, while the outer minimization seeks to update the model parameters to cut this adversarial loss.

The \textit{Learn-to-Distort-Data} formulation of privacy-preserving federated learning shares some similarities with adversarial training. In both cases, the objective is to seek the ideal perturbation or distortion which optimizes a certain objective function (e.g., minimizing the loss or maximizing the privacy) while satisfying certain constraints. As such, the techniques and insights from adversarial training, such as generating adversarial examples and optimizing model parameters to minimize the adversarial loss, can be adapted to generate privacy-preserving distortions and optimize the trade-off of privacy-utility within federated learning systems.

\subsection{Connection to Input-Robustness}

Input-robustness (\cite{zheng2016improving, kannan2018adversarial, yi2021improved}) refers to a model's ability to keep up its performance when noise or input disturbances are present. Input-robustness is a desirable property in many applications, such as image classification or speech recognition, in which the input could be subject to various types of distortions or corruptions.

The \textit{Learn-to-Distort-Data} formulation of privacy-preserving federated learning is also related to the concept of input-robustness. By explicitly modeling the distortion as a learnable variable and optimizing it to minimize the influence on the model's performance, the \textit{Learn-to-Distort-Data} formulation can be seen as a way to enforce input-robustness against the backdrop of privacy-preserving federated learning. The techniques and insights from input-robustness, such as data augmentation or stability training, can be adapted to design and analyze privacy-preserving mechanisms that are robust to input distortions.

\subsection{Connection to Unlearnable Examples}

Unlearnable examples (\cite{huang2021unlearnable, qin2023learning}) are input examples that are difficult or impossible for a model to learn from. Unlearnable examples can arise due to various factors, such as label noise, data corruption, or adversarial attacks.

Within the context of privacy-preserving federated learning, the distortion introduced to safeguard privacy can be seen as a form of data corruption that may render some examples unlearnable. The \textit{Learn-to-Distort-Data} formulation provides a way to explicitly model and optimize the distortion to minimize the impact on the model's ability to study from the distorted examples. The techniques and insights from the study of unlearnable examples, such as curriculum learning or robust optimization, can be adapted to design and analyze privacy-preserving mechanisms that are resilient to the presence of unlearnable examples.

By establishing these connections to related areas, we highlight the potential of the \textit{Learn-to-Distort-Data} framework in leveraging the techniques and insights from these areas to design effective algorithms for privacy-utility trade-off in federated learning. These connections also open up new possibilities for future research and development in the area of privacy-preserving machine learning.

\section{Conclusion}\label{sec: conclusion}

This paper proposed a unified \textit{Learn-to-Distort-Data} framework for privacy-preserving federated learning, which formulates the issue as a constrained optimisation issue and reduces it to a \textit{Learn-to-Distort-Data} problem. We demonstrated that the \textit{Learn-to-Distort-Data} framework can accommodate diverse privacy-preserving mechanisms used in federated learning, including differential privacy, secure multiparty computation, and homomorphic encryption. We highlighted that the \textit{Learn-to-Distort-Data} formulation implies connections to various related areas, such as adversarial training, input-robustness, and unlearnable examples, which can be leveraged to design effective algorithms for the trade-off of privacy-utility in federated learning. We provided a rigorous theoretical analysis of the trade-off of privacy-utility under our \textit{Learn-to-Distort-Data} framework, which offers valuable insights for the design and evaluation of privacy-preserving mechanisms for federated learning environments.

Our work makes several important contributions to the realm of privacy-preserving federated learning. First, by formulating the problem as a \textit{Learn-to-Distort-Data} problem, we provide a unified framework that can accommodate a wide range of privacy-preserving mechanisms and enable the design of optimal mechanisms that reach a equilibrium across privacy protection and model performance. Second, by establishing connections to related areas, we highlight the potential of leveraging techniques and insights from these areas to design effective algorithms for the trade-off of privacy-utility in federated learning. Third, our theoretical analysis provides a rigorous foundation for understanding the trade-off of privacy-utility in federated learning and offers valuable insights for the creation and evaluation of privacy-preserving mechanisms.

There are several promising directions for future research. One direction is to explore more sophisticated distortion mechanisms that can generate more realistic and informative distortions, such as generative models or autoencoder-based techniques. Another direction is to develop more efficient algorithms for solving the \textit{Learn-to-Distort-Data} optimization problem, which can scale to large-scale federated learning systems with millions of clients and high-dimensional data. The last direction is to investigate the robustness and stability of the \textit{Learn-to-Distort-Data} framework under different types of attacks and noise models, and develop techniques for mitigating the influence of these considerations on the privacy and utility of the learned model.

In conclusion, we believe that our \textit{Learn-to-Distort-Data} framework provides a principled and flexible approach for navigating the trade-off of privacy-utility in trustworthy federated learning systems. Our theoretical analysis and algorithmic design offer valuable insights and tools for the development of privacy-preserving federated learning systems that can enable secure and effective collaboration among numerous clients while defending the confidentiality of individual participants. We desire that our work will stimulate further research and development in this important area, and contribute to the realization of the whole potential of federated learning in real-world applications.

\bibliographystyle{plain}
\bibliography{main}

\newpage

\begin{appendix}

\section{Notations used in the paper}\label{sec: notations table}

\begin{table}[h]
\small
\centering
\caption{Notations used in the paper}
\begin{tabular}{|p{0.2\textwidth}|p{0.8\textwidth}|}
\hline
\centering \textbf{Notation} & \centering \textbf{Explain}  \cr
\hline
\centering $D_k$ & \centering the private data of client $k$ \cr
\hline
\centering $\calD_k$ & \centering the support set of $D_k$ \cr
\hline
\centering $F^{\calA}_k$ & \centering the client's private data $D_k$ upon observing the protected information \cr
\hline
\centering $F^{\calO}_k$ & \centering the client's private data $D_k$ upon without observing any information \cr
\hline
\centering $F^{*}_k$ & \centering the client's private data $D_k$ upon observing the unprotected information \cr
\hline
\centering $w$ & \centering the model parameters \cr
\hline
\centering $\epsilon_{u}$ & \centering the utility loss \cr
\hline
\centering $U$ & \centering the model utility \cr
\hline
\centering $\mathcal W^*_k$ & \centering  the utility resulted from sharing any parameter in $\mathcal W^*_k$ achieves the maximal value. \cr
\hline
\centering $\Delta_k$ & \centering the maximum utility difference between the protected model parameters and the optimal model parameters for client $k$ \cr
\hline
\centering $\epsilon_{p, k}$ & \centering the privacy leakage of client $k$ \cr
\hline
\centering $\epsilon_{p}$ & \centering the average privacy leakage \cr
\hline
\centering $JS_{\alpha}$ & \centering the generalized Jensen-Shannon divergence \cr
\hline
\centering $JS$ & \centering the standard Jensen-Shannon divergence \cr
\hline
\centering $x$ & \centering the input data or training data \cr
\hline
\centering $y$ & \centering the corresponding label  \cr
\hline
\centering $T$ & \centering the distribution of the training data set  \cr
\hline
\centering $\calL(\cdot)$ & \centering the loss function  \cr
\hline
\centering $f(\cdot)$ & \centering the model function  \cr
\hline
\centering $\delta$ & \centering the distortion  \cr
\hline
\centering $\para$ & \centering the model parameters  \cr
\hline
\centering $P^{o}_a$ & \centering the distribution of the aggregated parameter before being protected  \cr
\hline
\centering $P^d_a$ & \centering the distribution of the aggregated parameter after being protected  \cr
\hline
\centering $\calW^o_k$ & \centering the support set of the $P^{o}_a$  \cr
\hline
\centering $\calW^d_k$ & \centering the support set of the $P^d_a$  \cr
\hline
\end{tabular}
\end{table}

\section{Theoretical Analysis for Privacy Leakage}

\begin{thm}\label{thm: privacy_leakage_distortion_extent_app}
Consider a semi-honest adversary who employs an optimization algorithm to infer the original data of client $k$ based on the exposed model parameter $\theta$. Let $d$ represent the inferred data satisfying $g(d) = \theta$, and $\breve d$ represent the original data satisfying $g(\breve d) = \breve \theta$, where $\breve \theta$ denotes the original model parameter, $\theta$ denotes the protected model parameter, and $g(d) = \frac{\partial \mathcal{L} (\theta, d)}{\partial \theta}$. Let $d^{(m)}$ and $\breve d^{(m)}$ represent the $m$-th data sample of datasets $d$ and $\breve d$, respectively. Define the distortion extent of the data as $\Delta = \left\|\frac{1}{n_k}\sum_{i = 1}^{n_k} d^{(i)} - \frac{1}{n_k}\sum_{i = 1}^{n_k} \breve d^{(i)}\right\|$. Assume the expected regret of the optimization algorithm over $I$ ($I > 0$) rounds is $\Theta(I^p)$.
If $\Delta\ge{2c_2 c_b}\cdot I^{p-1}$, then the following inequality holds:
\begin{align}
\epsilon_p \le 1 - \frac{\Delta + c_2\cdot c_b\cdot I^{p-1}}{4D}.
\end{align}
\end{thm}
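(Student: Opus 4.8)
The plan is to reduce the claimed upper bound on $\epsilon_p$ to a lower bound on the average reconstruction distance, and then to extract that lower bound from the distortion extent $\Delta$ and the regret hypothesis. Recalling the privacy-leakage definition \pref{eq: privacy_leakage}, for $I>0$ we have
\[
D(1-\epsilon_p) = \frac{1}{I}\sum_{i=1}^{I}\frac{1}{n_k}\sum_{m=1}^{n_k}\|d_i^{(m)} - \breve d^{(m)}\|,
\]
so the target inequality $\epsilon_p \le 1 - \frac{\Delta + c_2 c_b I^{p-1}}{4D}$ is exactly equivalent to $D(1-\epsilon_p) \ge \tfrac{1}{4}(\Delta + c_2 c_b I^{p-1})$. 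Thus the entire task is to lower-bound the averaged per-sample reconstruction distance by something of order $\Delta$.

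First I would pass from per-sample distances to distances between empirical means. Writing $\bar d_i = \frac{1}{n_k}\sum_m d_i^{(m)}$, $\bar d = \frac{1}{n_k}\sum_m d^{(m)}$, and $\bar{\breve d} = \frac{1}{n_k}\sum_m \breve d^{(m)}$, convexity of the norm gives $\frac{1}{n_k}\sum_m \|d_i^{(m)} - \breve d^{(m)}\| \ge \|\bar d_i - \bar{\breve d}\|$. A reverse triangle inequality against the final reconstruction $\bar d$ then yields $\|\bar d_i - \bar{\breve d}\| \ge \|\bar d - \bar{\breve d}\| - \|\bar d_i - \bar d\| = \Delta - \|\bar d_i - \bar d\|$, since $\Delta = \|\bar d - \bar{\breve d}\|$ by definition. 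Averaging over $i$ produces the key inequality
\[
D(1-\epsilon_p) \ge \Delta - \frac{1}{I}\sum_{i=1}^{I}\|\bar d_i - \bar d\|.
\]

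Next I would control the averaged trajectory gap $\frac{1}{I}\sum_i \|\bar d_i - \bar d\|$ via the regret hypothesis, which is the crux and the step I expect to be the main obstacle. The adversary optimizes the objective $\|g(d)-\theta\|^2$ of \pref{eq: attacker_objective}, whose optimum corresponds to the reconstruction $\bar d$; an expected regret of $\Theta(I^p)$ means the average optimality gap decays like $I^{p-1}$. Converting this objective-space guarantee into the data-space quantity $\frac{1}{I}\sum_i \|\bar d_i - \bar d\|$ needs regularity of $g$ (e.g. Lipschitz/smoothness conditions translating suboptimality into iterate distance); this is precisely what is packaged in \pref{lem: privacy_leakage_distortion_relationship}, and it is where the two constants enter, yielding $\frac{1}{I}\sum_i \|\bar d_i - \bar d\| \le c_2 c_b I^{p-1}$ with $c_b$ the leading regret constant and $c_2$ the conversion factor.

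Finally I would combine the two bounds: substitution gives $D(1-\epsilon_p) \ge \Delta - c_2 c_b I^{p-1}$. Under the hypothesis $\Delta \ge 2 c_2 c_b I^{p-1}$ we have $c_2 c_b I^{p-1} \le \Delta/2$, and the elementary inequality $\Delta - c_2 c_b I^{p-1} \ge \tfrac{1}{4}(\Delta + c_2 c_b I^{p-1})$ — equivalent to $3\Delta \ge 5 c_2 c_b I^{p-1}$ — then holds comfortably. This delivers $D(1-\epsilon_p) \ge \tfrac{1}{4}(\Delta + c_2 c_b I^{p-1})$, hence the theorem. The factor $4$ and the $+$ sign are deliberate slack, so the only genuinely delicate step is the regret-to-distance conversion of the previous paragraph.
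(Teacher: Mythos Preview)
Your proposal is correct and follows essentially the same route as the paper: reduce to lower-bounding $D(1-\epsilon_p)$, extract $\Delta$ via a triangle inequality, bound the residual trajectory term by combining the regret assumption with a bi-Lipschitz property of $g$, and finish with the hypothesis $\Delta \ge 2c_2 c_b I^{p-1}$. One cosmetic slip: in the paper $c_2$ is the regret constant (\pref{lem: regret_bounds}) and $c_b$ is the data-to-gradient conversion factor (\pref{lem: data_distance_bounds}), the reverse of your labeling.
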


To prove this theorem, we first introduce several lemmas that establish the relationship between the privacy leakage and the distortion extent.

\begin{lem}[Regret Bounds]\label{lem: regret_bounds}
Let $d_i$ denote the remodeled data at iteration $i$ using the optimization algorithm. The cumulative regret over $I$ rounds satisfies the following bounds:
\begin{align}\label{eq: regret_bounds}
c_1\cdot I^p \le \sum_{i = 1}^{I}\left\|g(d_i) - g(d)\right\| = \Theta(I^p) \le c_2\cdot I^p,
\end{align}
where $c_1$ and $c_2$ are constants independent of $I$.
\end{lem}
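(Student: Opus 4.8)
The plan is to recognize that this lemma is essentially an unpacking of the big-$\Theta$ assumption on the adversary's optimization algorithm, together with an identification of the abstract cumulative regret with the displayed sum. First I would make precise what ``regret'' means in this setting. At each iteration $i$ the adversary maintains an iterate $d_i$ and is effectively minimizing the objective $\|g(d) - \theta\|^2$ introduced in \pref{eq: attacker_objective}; the natural per-round cost measured in parameter space is $\|g(d_i) - \theta\|$. Since the inferred data $d$ is defined by $g(d) = \theta$, this per-round cost equals $\|g(d_i) - g(d)\|$, and the best fixed decision in hindsight attains zero cost (take $d$ itself, for which $g(d) = \theta$). Consequently the cumulative regret over $I$ rounds coincides with $\sum_{i=1}^{I}\|g(d_i) - g(d)\|$, which is exactly the quantity in the statement.

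With this identification in hand, I would invoke the standing hypothesis of \pref{thm: privacy_leakage_distortion_extent} that the expected regret of the optimization algorithm over $I$ rounds is $\Theta(I^p)$. By the definition of big-$\Theta$ notation there exist positive constants $c_1, c_2$ (and a threshold $I_0$) such that for all sufficiently large $I$,
\[
c_1\cdot I^p \le \sum_{i=1}^{I}\|g(d_i) - g(d)\| \le c_2\cdot I^p .
\]
This is precisely the claimed two-sided bound, with $c_1$ and $c_2$ independent of $I$; the upper constant $c_2$ is the one that will later propagate into the privacy-leakage estimate, so I would be careful to carry it through explicitly.

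The only content beyond unpacking notation is the identification in the first step, so that is where I would concentrate the argument and where the main (minor) obstacle lies: I must confirm that regret is measured in the image/parameter space of $g$ rather than in data space, and that the comparator achieves zero loss. Both follow from the attacker's objective and the defining relation $g(d) = \theta$. Two modeling caveats I would address for rigor: if the algorithm is randomized, the equalities and inequalities should be read in expectation, matching the ``expected regret'' phrasing of the hypothesis; and if one prefers a squared per-round cost $\|g(d_i) - \theta\|^2$, the resulting exponent can be absorbed into $p$ and the constants, so the statement is robust to that choice. Notably, no convexity or smoothness of $g$ is required for this lemma itself — those structural assumptions enter only in the subsequent lemmas that convert this parameter-space regret into a bound on the data-space distortion extent $\Delta$.
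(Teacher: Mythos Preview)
Your proposal is correct and follows essentially the same approach as the paper: both define the cumulative regret as $\sum_{i=1}^{I}\bigl[\|g(d_i)-\theta\|-\|g(d)-\theta\|\bigr]$, use $g(d)=\theta$ to collapse this to $\sum_{i=1}^{I}\|g(d_i)-g(d)\|$, and then unpack the standing $\Theta(I^p)$ hypothesis into the two-sided bound with constants $c_1,c_2$. Your additional remarks on randomization and the choice of per-round cost are reasonable caveats but go beyond what the paper's proof records.
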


This lemma provides bounds on the cumulative regret of the adversary's optimization algorithm over $I$ rounds. It shows that the regret grows at a rate of $\Theta(I^p)$, where $p$ is a constant that depends on the specific algorithm used. These bounds will be used later in the proof of the main theorem.

\begin{proof}
The cumulative regret over $I$ rounds is defined as:
\begin{align*}
R(I) & = \sum_{i = 1}^{I} \left[\left\|g(d_i) - \theta\right\| - \left\|g(d) - \theta\right\|\right]\\
& = \sum_{i = 1}^{I} \left[\left\|g(d_i) - g(d)\right\|\right]\\
& = \Theta(I^p).
\end{align*}
Therefore, we have:
\begin{align*}
c_1\cdot I^p \le \sum_{i = 1}^{I}\left\|g(d_i) - g(d)\right\| = \Theta(I^p) \le c_2\cdot I^p,
\end{align*}
where $c_1$ and $c_2$ are constants independent of $I$.
\end{proof}

\begin{lem}[Data Distance Bounds]\label{lem: data_distance_bounds}
Let $x$ and $\tilde x$ represent two data samples. We assume that the following bounds hold:
\begin{align}
c_a \left\|g(x) - g(\tilde x)\right\|\le \left\|x - \tilde x\right\|\le c_b \left\|g(x) - g(\tilde x)\right\|,
\end{align}
where $c_a$ and $c_b$ are constants independent of $x$ and $\tilde x$.
\end{lem}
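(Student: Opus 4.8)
The statement is posed as an assumption, so the task of a proof is really to exhibit natural regularity conditions on the adversary's gradient map $g(\cdot)=\partial\mathcal L(\theta,\cdot)/\partial\theta$ under which both inequalities hold; read together they assert a \emph{bi-Lipschitz} equivalence between distances in data space and distances in gradient (model-update) space, with the two constants $c_a,c_b$ recording the two Lipschitz moduli. The plan is to control the Jacobian of $g$ with respect to the data, $J(u)=\nabla_u g(u)=\nabla_u\nabla_\theta\mathcal L(\theta,u)$, uniformly over a convex domain containing the data samples, and to obtain each inequality from a spectral bound on $J$.

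First I would write the increment of $g$ as a path integral of its Jacobian,
\begin{align*}
g(x)-g(\tilde x)=\left(\int_0^1 J\big(\tilde x+t(x-\tilde x)\big)\,dt\right)(x-\tilde x),
\end{align*}
which is valid once $g$ is continuously differentiable on the segment joining the two samples. The left inequality $c_a\|g(x)-g(\tilde x)\|\le\|x-\tilde x\|$ is the easy half: assuming a uniform operator-norm bound $\|J(u)\|_{\mathrm{op}}\le L$, the triangle inequality applied to the integral gives $\|g(x)-g(\tilde x)\|\le L\|x-\tilde x\|$, which is exactly the claim with $c_a=1/L$.

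The right inequality $\|x-\tilde x\|\le c_b\|g(x)-g(\tilde x)\|$ is the main obstacle, and is where the real work lies: a pointwise lower bound on the smallest singular value of $J(u)$ does \emph{not} by itself control the averaged Jacobian appearing above, which can degenerate even when each $J(u)$ is well conditioned. In the case where the data and parameter dimensions agree, I would close it under the structural hypothesis that the symmetric part $\tfrac12\big(J(u)+J(u)^\top\big)$ is uniformly positive definite with smallest eigenvalue at least $\mu>0$; inserting this into the integral representation yields strong monotonicity $\langle g(x)-g(\tilde x),\,x-\tilde x\rangle\ge\mu\|x-\tilde x\|^2$, and Cauchy--Schwarz then gives $\|g(x)-g(\tilde x)\|\ge\mu\|x-\tilde x\|$, i.e. the right inequality with $c_b=1/\mu$. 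When $J$ is rectangular this monotonicity route is unavailable, and I would instead invoke a global inverse-function (Hadamard-type) argument: uniform full column rank of $J$ together with properness of $g$ makes $g$ a bi-Lipschitz homeomorphism onto its image, again producing a finite $c_b$.

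Finally I would record that $c_a=1/L$ and $c_b=1/\mu$ depend only on the uniform spectral bounds on $J$ and not on the particular points $x,\tilde x$, which is precisely the independence asserted in the statement; these constants are then exactly the ones consumed, together with the regret bounds of \pref{lem: regret_bounds}, in the proof of \pref{thm: privacy_leakage_distortion_extent_app}.
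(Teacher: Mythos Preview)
Your reading is exactly right: in the paper this lemma is not proved at all --- it is stated purely as a standing assumption (``We assume that the following bounds hold'') and is invoked without further justification in the proof of \pref{lem: privacy_leakage_distortion_relationship}. There is no accompanying argument, not even a sketch of sufficient conditions.

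Your proposal therefore goes well beyond what the paper does. You correctly identify the content of the two inequalities as a bi-Lipschitz equivalence between data space and gradient space, and you supply reasonable sufficient conditions on the mixed Hessian $J(u)=\nabla_u\nabla_\theta\mathcal L(\theta,u)$: a uniform operator-norm bound for the easy direction, and uniform positive definiteness of the symmetric part (in the square case) or a Hadamard-type global-inverse argument (in the rectangular case) for the harder direction. The mathematics here is sound, and your warning that a pointwise singular-value lower bound does not automatically control the path-averaged Jacobian is a genuine subtlety that the paper does not engage with at all. In short, the paper treats the statement as an axiom; your write-up is a legitimate justification of when that axiom can be expected to hold, and as such is strictly more informative than the original.
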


This lemma assumes that the distance between two data samples can be bounded in terms of the distance between their corresponding model parameters under the mapping $g$. These bounds will be used to establish the relationship between the privacy leakage and the distortion extent.

\begin{lem}[Privacy Leakage and Distortion Relationship]\label{lem: privacy_leakage_distortion_relationship}
Let $d_i^{(m)}$ denote the remodeled $m$-th data sample at iteration $i$ using the optimization algorithm. We have the following relationship between the privacy leakage and the distortion extent:
\begin{align*}
D(1-\epsilon_p) = \frac{1}{I}\sum_{i = 1}^{I} \frac{1}{n_k}\sum_{m = 1}^{n_k}\left\|d_i^{(m)} - \breve d^{(m)}\right\|
&\ge \Delta - c_b\cdot\frac{1}{I}\sum_{i = 1}^{I}\sum_{m = 1}^{n_k}\left\|g(d_i^{(m)}) - g(d^{(m)})\right\|\\
&\ge \Delta - c_2\cdot c_b I^{p-1}\\
&\ge\frac{1}{2}\max\left\{\Delta, c_2\cdot c_b I^{p-1}\right\} \\
&\ge \frac{\Delta + c_2\cdot c_b I^{p-1}}{4}.
\end{align*}
\end{lem}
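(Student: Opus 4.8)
The plan is to establish the displayed chain one link at a time, reading it left to right. The opening equality is pure bookkeeping: rearranging the definition of privacy leakage in \pref{eq: privacy_leakage} for the case $I>0$ gives $D\epsilon_p = D - \frac{1}{I}\sum_{i=1}^I\frac{1}{n_k}\sum_{m=1}^{n_k}\|d_i^{(m)}-\breve d^{(m)}\|$, so that $D(1-\epsilon_p)$ equals the stated empirical average distance (identifying $|\mathcal D|=n_k$). No inequality is used here.

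For the first inequality I would apply the triangle inequality twice at the level of per-iteration averages. For each fixed $i$, the forward triangle inequality gives $\frac{1}{n_k}\sum_{m}\|d_i^{(m)}-\breve d^{(m)}\|\ge\|\frac{1}{n_k}\sum_m(d_i^{(m)}-\breve d^{(m)})\|$, and then inserting the final inferred data $d^{(m)}$ and using the reverse triangle inequality produces the lower bound $\Delta - \|\frac{1}{n_k}\sum_m(d_i^{(m)}-d^{(m)})\|$, where $\Delta$ is exactly the distortion extent defined in the theorem. The subtracted term is controlled by the forward triangle inequality together with the Data Distance Bounds of \pref{lem: data_distance_bounds}: $\|\frac{1}{n_k}\sum_m(d_i^{(m)}-d^{(m)})\|\le\frac{1}{n_k}\sum_m\|d_i^{(m)}-d^{(m)}\|\le\frac{c_b}{n_k}\sum_m\|g(d_i^{(m)})-g(d^{(m)})\|\le c_b\sum_m\|g(d_i^{(m)})-g(d^{(m)})\|$, the last step merely dropping the harmless factor $1/n_k\le 1$. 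Averaging over $i=1,\dots,I$ then yields the first lower bound.

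The second inequality is where the regret control enters, and it is the step I expect to require the most care. Here one must identify the accumulated per-sample gradient discrepancy $\frac{1}{I}\sum_i\sum_m\|g(d_i^{(m)})-g(d^{(m)})\|$ with the cumulative regret $\frac{1}{I}\sum_i\|g(d_i)-g(d)\|$ bounded in \pref{lem: regret_bounds}; this identification is clean only if $g$ decomposes additively over the $n_k$ data samples, so that the regret measured on the full dataset dominates the sum of the per-sample terms. Granting it, \pref{lem: regret_bounds} gives $\sum_i\|g(d_i)-g(d)\|\le c_2 I^p$, hence the averaged quantity is at most $c_2 I^{p-1}$, and the bound $\Delta - c_2 c_b I^{p-1}$ follows. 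I would flag this aggregation assumption explicitly, since it is the only non-formal ingredient in the chain and the place where the argument is most fragile.

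The final two inequalities are elementary and invoke the theorem's hypothesis $\Delta\ge 2c_2 c_b I^{p-1}$. Writing $A=\Delta$ and $B=c_2 c_b I^{p-1}$, the hypothesis $A\ge 2B$ forces $\max\{A,B\}=A$ and $A-B\ge A/2=\frac12\max\{A,B\}$, which is the third inequality; the fourth is the arithmetic fact that $\frac12\max\{A,B\}\ge\frac{A+B}{4}$ since a maximum of two nonnegative numbers is at least their average. Concatenating the four links gives the claimed lower bound on $D(1-\epsilon_p)$, which is precisely what \pref{thm: privacy_leakage_distortion_extent} consumes when converting it into an upper bound on $\epsilon_p$.
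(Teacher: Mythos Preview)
Your proposal is correct and follows essentially the same route as the paper's proof: the opening equality from the definition of $\epsilon_p$, the triangle-inequality decomposition combined with \pref{lem: data_distance_bounds} for the first bound, the regret bound \pref{lem: regret_bounds} for the second, and the hypothesis $\Delta\ge 2c_2 c_b I^{p-1}$ plus the elementary $\max\ge$ average fact for the last two. The only cosmetic difference is that the paper applies the reverse triangle inequality per sample (obtaining $\frac{1}{n_k}\sum_m\|d^{(m)}-\breve d^{(m)}\|-\frac{1}{n_k}\sum_m\|d_i^{(m)}-d^{(m)}\|$) before invoking Jensen on the first sum, whereas you pass to the norm of the average first and then split; both orderings reach the same bound, and your explicit flag on the per-sample versus aggregate regret identification is a caveat the paper itself silently absorbs into the citation of \pref{lem: regret_bounds}.
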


This lemma establishes a key relationship between the privacy leakage $\epsilon_p$ and the distortion extent $\Delta$. It shows that the privacy leakage can be bounded in terms of the distortion extent and the cumulative regret of the adversary's optimization algorithm. This relationship forms the basis for proving the main theorem.

\begin{proof}
Recall the privacy leakage of client $k$ is defined as:
\begin{equation}
\epsilon_p=\left\{
\begin{array}{cl}
\frac{D - \frac{1}{I}\sum_{i = 1}^{I} \frac{1}{n_k}\sum_{m = 1}^{n_k}\left\|d_i^{(m)} - \breve d^{(m)}\right\|}{D}, & I > 0\\
0, & I = 0\\
\end{array} \right.
\end{equation}

We have:
\begin{align*}
\frac{1}{n_k}\sum_{m = 1}^{n_k}\left\|d_i^{(m)} - \breve d^{(m)}\right\|
&\ge \frac{1}{n_k}\sum_{m = 1}^{n_k}\left\|d^{(m)} - \breve d^{(m)}\right\| - \frac{1}{n_k}\sum_{m = 1}^{n_k}\left\|d_i^{(m)} - d^{(m)}\right\|\\
& \ge \Delta - c_b \sum_{m = 1}^{n_k}\left\|g(d_i^{(m)}) - g(d^{(m)})\right\|,
\end{align*}
where the second inequality is due to \pref{lem: data_distance_bounds}.

Therefore, we have:
\begin{align*}
D(1-\epsilon_p) = \frac{1}{I}\sum_{i = 1}^{I} \frac{1}{n_k}\sum_{m = 1}^{n_k}\left\|d_i^{(m)} - \breve d^{(m)}\right\|
&\ge \Delta - c_b\cdot\frac{1}{I}\sum_{i = 1}^{I}\sum_{m = 1}^{n_k}\left\|g(d_i^{(m)}) - g(d^{(m)})\right\|\\
&\ge \Delta - c_2\cdot c_b I^{p-1} \quad \text{(by \pref{lem: regret_bounds})}\\
&\ge\frac{1}{2}\max\left\{\Delta, c_2\cdot c_b I^{p-1}\right\}\\
&\ge \frac{\Delta + c_2\cdot c_b I^{p-1}}{4}.
\end{align*}
\end{proof}
Proof of \pref{thm: privacy_leakage_distortion_extent}:
\begin{proof}
From \pref{lem: privacy_leakage_distortion_relationship}, we have:
\begin{align*}
D(1-\epsilon_p) \ge \frac{\Delta + c_2\cdot c_b I^{p-1}}{4}.
\end{align*}
Therefore, we have:
\begin{align*}
\epsilon_{p} &\le 1 - \frac{\Delta + c_2\cdot c_b I^{p-1}}{4D}.
\end{align*}
This completes the proof of \pref{thm: privacy_leakage_distortion_extent}.
\end{proof}

\section{Theoretical Analysis for Utility Loss}
\label{sec:BP}

\begin{lem}[\cite{duchi2013local}]\label{lem: log_upper_bound}
For two positive numbers $a$ and $b$, we have
\begin{align*}
 \left|\log\left(\frac{a}{b}\right)\right| \le \frac{|a-b|}{\min\{a, b\}}.
\end{align*}
\end{lem}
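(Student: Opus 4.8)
The plan is to strip away the absolute value by a symmetry argument, reduce the two-sided bound to the single elementary inequality $\log u \le u - 1$ (valid for all $u>0$), and then dispose of that inequality by a tangent-line/convexity observation.

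First I would exploit the symmetry of the claim in $a$ and $b$. Since $\log(a/b) = -\log(b/a)$ and $|a-b| = |b-a|$, while $\min\{a,b\}$ is itself symmetric, the asserted inequality is unchanged when $a$ and $b$ are swapped. Hence I may assume without loss of generality that $a \ge b$. In that case $\log(a/b) \ge 0$, so the left-hand side equals $\log(a/b)$, and $\min\{a,b\} = b$ with $|a-b| = a-b$; the target reduces to
\[
\log\!\left(\frac{a}{b}\right) \le \frac{a-b}{b}.
\]

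Next I would substitute $t = a/b \ge 1$, turning the goal into exactly $\log t \le t - 1$. I would establish this standard bound by setting $h(t) = (t-1) - \log t$ and observing that $h(1) = 0$ while $h'(t) = 1 - 1/t \ge 0$ for $t \ge 1$; thus $h$ is nondecreasing on $[1,\infty)$ and $h(t) \ge h(1) = 0$ there. (Equivalently, concavity of $\log$ forces its graph to lie below the tangent line $y = t-1$ at $t=1$.) Undoing the substitution yields the inequality for $a \ge b$, and the earlier symmetry remark extends it to all positive $a, b$.

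There is no substantive obstacle: this is a textbook inequality. The only point requiring a line of explicit care is the reduction step, namely checking that the absolute value, the difference $|a-b|$, and the $\min$ all behave symmetrically under interchanging $a$ and $b$, so that treating the single case $a \ge b$ genuinely suffices. I would state that reduction cleanly before invoking $\log t \le t - 1$.
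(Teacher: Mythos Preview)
Your proof is correct. The paper itself does not supply a proof of this lemma at all; it merely cites the inequality from \cite{duchi2013local} and uses it as a black box, so there is no ``paper's own proof'' to compare against. Your symmetry reduction to the case $a\ge b$ followed by the substitution $t=a/b$ and the tangent-line bound $\log t\le t-1$ is the standard and cleanest way to establish this elementary inequality.
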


\subsection{The quantitative relationship between $||P^d - P^{o}||_{\text{TV}}$ and $\epsilon_u$}

\begin{lem}\label{lem: total_variation-utility trade-off}
Let \pref{assump: defi_of_Delta} hold, and $\epsilon_{u}$ be defined in \pref{defi: utility_loss}. Let $P^{o}_a$ and $P^d_a$ represent the distribution of the aggregated parameter before and after being protected. Then we have,
\begin{align*}
    \epsilon_{u} \ge \frac{1}{2K}\sum_{k=1}^K \Delta_k||P^d_a - P^{o}_a||_{\text{TV}}.
\end{align*}
\end{lem}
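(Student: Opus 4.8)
The plan is to prove the per-client inequality $\epsilon_{u,k}\ge \tfrac12\Delta_k\|P^d_a-P^o_a\|_{\text{TV}}$ and then average, since $\|P^d_a-P^o_a\|_{\text{TV}}$ carries no dependence on $k$: summing over $k$ and dividing by $K$ reproduces the claimed bound directly from \pref{defi: utility_loss}. First I would fix a client $k$, choose $w^{*}\in\mathcal W^{*}_k$, and abbreviate $U^{*}=U(w^{*},D_k)=\max_{w'\in\calW_k}U(w',D_k)$. Using that the unprotected model attains the maximal utility, so that $U(P^{o},D_k)=U^{*}$, together with the expectation form of $U(P^d,D_k)$ from \pref{defi: utility_loss}, I would rewrite
\[
\epsilon_{u,k}=U^{*}-\mathbb{E}_{w\sim P^d}\big[U(w,D_k)\big]=\mathbb{E}_{w\sim P^d}\big[U^{*}-U(w,D_k)\big].
\]
The integrand $h(w):=U^{*}-U(w,D_k)$ is non-negative precisely because $U^{*}$ is the maximal utility; this is the structural fact that licenses every subsequent one-sided bound.

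Next I would split the expectation over the near-optimal set $\calW_{\Delta_k}=\{w\in\calW^d_k:|U(w,D_k)-U^{*}|\le\Delta_k\}$ and its complement inside the protected support $\calW^d_k$. On the complement we have $h(w)=U^{*}-U(w,D_k)>\Delta_k$, where the absolute value collapses to $U^{*}-U(w,D_k)$ since $U^{*}$ is the maximum, while the contribution of $\calW_{\Delta_k}$ is non-negative and may simply be discarded. Using $\int_{\calW^d_k}p^d_{W_k}(w)\,dw=1$, this yields
\[
\epsilon_{u,k}\;\ge\;\int_{\calW^d_k\setminus\calW_{\Delta_k}}h(w)\,p^d_{W_k}(w)\,dw\;\ge\;\Delta_k\Big(1-\int_{\calW_{\Delta_k}}p^d_{W_k}(w)\,dw\Big).
\]

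To finish, I would invoke the defining property of the majority gap, namely $\int_{\calW_{\Delta_k}}p^d_{W_k}(w)\,dw\le\tfrac12\|P^d_a-P^o_a\|_{\text{TV}}$, giving $\epsilon_{u,k}\ge\Delta_k\big(1-\tfrac12\|P^d_a-P^o_a\|_{\text{TV}}\big)$. The last small step is to observe that $\|P^d_a-P^o_a\|_{\text{TV}}\le1$, so that $1-\tfrac12 t\ge\tfrac12 t$ with $t=\|P^d_a-P^o_a\|_{\text{TV}}$, leaving $\epsilon_{u,k}\ge\tfrac12\Delta_k\|P^d_a-P^o_a\|_{\text{TV}}$; averaging over $k$ completes the proof, and \pref{assump: defi_of_Delta} ensures $\Delta_k>0$ so the statement is non-vacuous. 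I expect the main obstacle to be the identification $U(P^{o},D_k)=U^{*}$ — i.e. that the unprotected distribution realizes the maximal utility, which is exactly what makes $h$ non-negative and allows the near-optimal contribution to be dropped — together with keeping the per-client protected support $\calW^d_k$ and density $p^d_{W_k}$ consistent with the aggregated total-variation term $\|P^d_a-P^o_a\|_{\text{TV}}$ that enters through the majority-gap inequality.
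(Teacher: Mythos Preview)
Your proof is correct and rests on the same structural assumption the paper uses implicitly in its own argument, namely that the support $\calW^o_k$ of the unprotected distribution lies inside $\mathcal W^{*}_k$, so that $U(P^{o},D_k)=U^{*}$; you rightly flag this identification as the crux.

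The routes differ. The paper first splits $\mathcal W_k$ into the half-spaces $\mathcal U_k=\{w:dP^{d}(w)>dP^{o}(w)\}$ and $\mathcal V_k=\{w:dP^{d}(w)\le dP^{o}(w)\}$, uses $\mathcal V_k\subset\calW^o_k\subset\mathcal W^{*}_k$ to collapse the $\mathcal V_k$-integral, and arrives at $\epsilon_{u,k}=\int_{\mathcal U_k}(U^{*}-U(w,D_k))\,[dP^{d}-dP^{o}]$. Because $\int_{\mathcal U_k}[dP^{d}-dP^{o}]=\|P^d_a-P^o_a\|_{\text{TV}}$ by definition, the TV term appears directly after peeling off the $\calW_{\Delta_k}$ portion and invoking the majority-gap bound, giving $\epsilon_{u,k}\ge\tfrac12\Delta_k\|P^d_a-P^o_a\|_{\text{TV}}$ with no further loss. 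Your argument instead integrates against $P^{d}$ alone, which is more elementary (no signed-measure decomposition) and yields the genuinely stronger intermediate inequality $\epsilon_{u,k}\ge\Delta_k\bigl(1-\tfrac12\|P^d_a-P^o_a\|_{\text{TV}}\bigr)$; you then spend that extra strength via $1-t/2\ge t/2$ for $t\le1$ to match the stated bound. So your approach is simpler and in fact proves more along the way, while the paper's half-space decomposition makes the TV distance emerge naturally without the final weakening step.
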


\begin{proof}

Let $\mathcal U_k = \{w\in\mathcal W_k: dP^{d}(w) - dP^{o}(w)>0\}$, and $\mathcal V_k = \{w\in\mathcal W_k: dP^{d}(w) - dP^{o}(w)\le 0\}$, $\calW_{\Delta_k} = \{w\in\calW^d_k: 0<|U(w, D_k) - U(w^{\star}, D_k)|\le\Delta_k\}$. For any $w\in\mathcal V_k$, the definition of $\mathcal V_k$ implies that $dP^{o}(w)\ge dP^{d}(w)\ge 0$. Therefore, w belongs to the support of $P^{o}$, which is denoted as $\calW^o_k$. Then, we have
\begin{align*}
\mathcal V_k\subset\calW^o_k\subset\mathcal W^{\star}_k
\end{align*}

Similarly, we have that
\begin{align*}
\mathcal U_k\subset\calW^d_k
\end{align*}

We decompose $\int_{\mathcal{U}_k} U(w, d)\one_{w\in\calW_{\Delta_k}}[d P^{d}_a(w) - d P^{o}_a(w)]$ as the summation of $\int_{\mathcal{U}_k} U(w, d)\one_{w\in\calW_{\Delta_k}}[d P^{d}_a(w) - d P^{o}_a(w)]$ and $\int_{\mathcal{U}_k} U(w, d)\one_{w\notin\calW_{\Delta_k}}[d P^{d}_a(w) - d P^{o}_a(w)]$.

Then we have
\begin{align*}
&\epsilon_{u} = \frac{1}{K}\sum_{k=1}^K \epsilon_{u, k}\\
    & = \frac{1}{K}\sum_{k=1}^K [U(P^{o}_a, D_k) - U(P^d_a, D_k)]\\  
    &= \frac{1}{n_k}\sum_{d\in\calD_k} \mathbb E_{w\sim P^{o}_a}U(w, d) - \frac{1}{n_k}\sum_{d\in\calD_k} \mathbb E_{w\sim P^d_a}U(w, d)\\
    & = \frac{1}{K}\sum_{k=1}^K\left[\frac{1}{n_k}\sum_{d\in\calD_k} \int_{\mathcal W_k} U(w, d) dP^{o}(w) - \frac{1}{n_k}\sum_{d\in\calD_k} \int_{\mathcal W_k} U(w, d)dP^{d}(w)\right]\\
    & = \frac{1}{K}\sum_{k=1}^K\left[\frac{1}{n_k}\sum_{d\in\calD_k}\int_{\mathcal{V}_k} U(w, d)[d P^{o}_a(w) - d P^{d}_a(w)] - \frac{1}{n_k}\sum_{d\in\calD_k}\int_{\mathcal{U}_k} U(w, d)[d P^{d}_a(w) - d P^{o}_a(w)]\right]\\
    &\ge\frac{1}{K}\sum_{k=1}^K\left[\Delta_k\cdot[||P^d_a - P^{o}_a||_{\text{TV}} - \frac{1}{n_k}\sum_{d\in\calD_k}\int_{\mathcal{U}_k} U(w, d)\one_{w\in\calW_{\Delta_k}}[d P^{d}_a(w) - d P^{o}_a(w)]\right]\\
    &\ge\frac{1}{K}\sum_{k=1}^K\Delta_k\cdot[||P^d_a - P^{o}_a||_{\text{TV}} - \int_{\calW_{\Delta_k}} p^{d}_{W_k}(w) dw]\\
    &\ge\frac{1}{K}\sum_{k=1}^K\Delta_k\cdot\frac{||P^d_a - P^{o}_a||_{\text{TV}}}{2},
\end{align*}
where the first inequality is due to $\mathcal V_k\subset \mathcal W^{\star}_k$, the second inequality is due to $\int_{\mathcal{U}_k} U(w, d)\one_{w\in\calW_{\Delta_k}}[d P^{d}_a(w) - d P^{o}_a(w)] \le \int_{\mathcal{U}_k} U(w, d)\one_{w\in\calW_{\Delta_k}}d P^{d}_a(w) \le \int_{\mathcal{W}^d_k} U(w, d)\one_{w\in\calW_{\Delta_k}}d P^{d}_a(w) = \int_{\calW_{\Delta_k}} p^{d}_{W_k}(w) dw $, the third inequality is due to $\int_{\calW_{\Delta_k}} p^{d}_{W_k}(w) dw\le\frac{||P^d_a - P^{o}_a||_{\text{TV}}}{2}$.
\end{proof}

\section{The Theoretical Analysis using Generalized JS Divergence}\label{sec: generalized_JSD}

\subsection{The quantitative relationship between $||P^d_k - P^{o}_k||_{\text{TV}}$ and $\epsilon_{p, k}$}

\begin{lem}\label{lem: GeneralizedJSBound_app}
Define the privacy leakage of any parameter $w$ released by client $k$ as $\epsilon_{p,w,k} = \log\left(\frac{f_{D_k|W_k}(d|w)}{f_{D_k}(d)}\right)$. Let \pref{assump: defi_of_Delta} hold, and $\epsilon_{p,k}$ be defined in \pref{defi: average_privacy_dtv_app}. Let $P^{o}_k$ and $P^d_k$ represent the distribution of the parameter of client $k$ before and after being protected. Let $F^{\calA}_k$ and $F^{\star}_k$ represent the belief of client $k$ about $D$ after observing the protected and original parameter. Let $F^{\calM_\alpha}_k$ represent a smoothed distribution, the density function of which is defined as $f^{\calM_\alpha}_{D_k}(d) = \alpha f^{\calA}_{D_k}(d) + (1-\alpha)f^{\star}_{D_k}(d)$. Define $JS_{\alpha}(F^{\calA}_k || F^{\star}_k) = \alpha KL(F^{\calA}_k || F^{\calM_\alpha}_k) + (1-\alpha)KL(F^{\star}_k || F^{\calM_\alpha}_k)$.
Then, we have
\begin{align*}
JS_{\alpha}(F^{\calA}_k || F^{\star}_k)\le 2\alpha(1-\alpha)(e^{2\delta}-1)||P^{o}_k - P^{d}_k||_{\text{TV}}. 
\end{align*}
\end{lem}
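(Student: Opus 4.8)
The plan is to bound the generalized divergence directly from its definition and control each Kullback--Leibler term pointwise. Starting from $JS_{\alpha}(F^{\calA}_k \| F^{\star}_k) = \alpha KL(F^{\calA}_k \| F^{\calM_\alpha}_k) + (1-\alpha) KL(F^{\star}_k \| F^{\calM_\alpha}_k)$ and using that a sum of two nonnegative quantities is at most twice their maximum, I first reduce to $JS_{\alpha}(F^{\calA}_k \| F^{\star}_k) \le \max\{2\alpha KL(F^{\calA}_k \| F^{\calM_\alpha}_k),\, 2(1-\alpha) KL(F^{\star}_k \| F^{\calM_\alpha}_k)\}$, which is exactly \pref{eq: general_beta_alpha_mt}. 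Each divergence has the form $\int f \log(f/f^{\calM_\alpha}_{D_k})\,d\mu$; since $f$ integrates to one, this is bounded by the pointwise maximum $\max_{d \in \calD_k}\left|\log(f_{D_k}(d)/f^{\calM_\alpha}_{D_k}(d))\right|$ for the relevant density $f \in \{f^{\calA}_{D_k}, f^{\star}_{D_k}\}$, so it suffices to control the two pointwise log-ratios.

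Next I would invoke \pref{lem: log_upper_bound} with $a = f^{\calA}_{D_k}(d)$ (respectively $f^{\star}_{D_k}(d)$) and $b = f^{\calM_\alpha}_{D_k}(d)$. The key algebraic observation is that the mixture structure $f^{\calM_\alpha}_{D_k} = \alpha f^{\calA}_{D_k} + (1-\alpha) f^{\star}_{D_k}$ gives $|f^{\calA}_{D_k}(d) - f^{\calM_\alpha}_{D_k}(d)| = (1-\alpha)\,|f^{\calA}_{D_k}(d) - f^{\star}_{D_k}(d)|$ and symmetrically $|f^{\star}_{D_k}(d) - f^{\calM_\alpha}_{D_k}(d)| = \alpha\,|f^{\calA}_{D_k}(d) - f^{\star}_{D_k}(d)|$. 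Together with the prefactors $2\alpha$ and $2(1-\alpha)$, both branches of the maximum thereby acquire the common factor $2\alpha(1-\alpha)$, and the problem collapses to bounding the single ratio $|f^{\calA}_{D_k}(d) - f^{\star}_{D_k}(d)|/\min\{f^{\calA}_{D_k}(d), f^{\calM_\alpha}_{D_k}(d)\}$.

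The final and most delicate step expresses this ratio through $\delta$ and the total variation distance. Writing $f^{\calA}_{D_k}(d) = \int f_{D_k|W_k}(d|w)\,dP^d_k(w)$ and $f^{\star}_{D_k}(d) = \int f_{D_k|W_k}(d|w)\,dP^{o}_k(w)$, the definition of $\delta$ supplies the uniform sandwich $e^{-\delta} f_{D_k}(d) \le f_{D_k|W_k}(d|w) \le e^{\delta} f_{D_k}(d)$, whence $f^{\calA}_{D_k}(d)$, $f^{\star}_{D_k}(d)$, and their mixture $f^{\calM_\alpha}_{D_k}(d)$ all lie in $[e^{-\delta}f_{D_k}(d),\, e^{\delta}f_{D_k}(d)]$; in particular $\min\{f^{\calA}_{D_k}(d), f^{\calM_\alpha}_{D_k}(d)\} \ge e^{-\delta} f_{D_k}(d)$. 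I then decompose the signed measure $P^d_k - P^{o}_k$ into its positive and negative parts, each of total mass $\|P^{o}_k - P^d_k\|_{\text{TV}}$, and bound $f_{D_k|W_k}(d|w)$ above by $e^{\delta} f_{D_k}(d)$ on the positive part and below by $e^{-\delta} f_{D_k}(d)$ on the negative part, obtaining $|f^{\calA}_{D_k}(d) - f^{\star}_{D_k}(d)| \le (e^{\delta} - e^{-\delta})\, f_{D_k}(d)\,\|P^{o}_k - P^d_k\|_{\text{TV}}$. Dividing by the lower bound $e^{-\delta} f_{D_k}(d)$ converts $(e^{\delta}-e^{-\delta})/e^{-\delta}$ into precisely $e^{2\delta}-1$, so the ratio is at most $(e^{2\delta}-1)\,\|P^{o}_k - P^d_k\|_{\text{TV}}$, and substituting back yields $JS_{\alpha}(F^{\calA}_k \| F^{\star}_k) \le 2\alpha(1-\alpha)(e^{2\delta}-1)\,\|P^{o}_k - P^d_k\|_{\text{TV}}$. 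I expect the Hahn--Jordan decomposition and keeping the two likelihood-ratio bounds oriented consistently to be the main obstacle, since that is exactly where the $e^{2\delta}-1$ factor (rather than $e^{\delta}-e^{-\delta}$) is generated.
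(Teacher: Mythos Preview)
Your proposal is correct and follows essentially the same approach as the paper's own proof: the same reduction to \pref{eq: general_beta_alpha_mt}, the same pointwise control of the log-ratios via \pref{lem: log_upper_bound}, the same extraction of the $2\alpha(1-\alpha)$ factor from the mixture identities, and the same Hahn--Jordan splitting of $P^d_k-P^o_k$. The only cosmetic difference is that the paper anchors the likelihood bounds at $\inf_{w} f_{D_k|W_k}(d|w)$ (writing $\sup-\inf\le\inf\cdot(e^{2\delta}-1)$ and then cancelling the $\inf$ against the denominator $\min\{f^{\calA}_{D_k},f^{\star}_{D_k}\}\ge\inf_w f_{D_k|W_k}$), whereas you anchor at $f_{D_k}(d)$ (getting $(e^{\delta}-e^{-\delta})f_{D_k}(d)$ and dividing by $e^{-\delta}f_{D_k}(d)$); both routes yield the same $(e^{2\delta}-1)$ factor.
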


\begin{proof}

The generalized JS metric satisfies that 
\begin{align*}
    JS_{\alpha}(F^{\calA}_k || F^{\star}_k)\le \max\{2\alpha KL(F^{\calA}_k || F^{\calM_\alpha}_k), 2(1-\alpha)KL(F^{\star}_k || F^{\calM_\alpha}_k)\}.
\end{align*}

Notice that
\begin{align*}
    KL(F^{\calA}_k || F^{\calM_\alpha}_k) = \int_{\mathcal{\calD}_k} f^{\calA}_{D_k}(d)\log\frac{f^{\calA}_{D_k}(d)}{f^{\calM_\alpha}_{D_k}(d)}d\mu(s), 
\end{align*}

and 

\begin{align*}
    KL(F^{\star}_k || F^{\calM_\alpha}_k) = \int_{\mathcal{\calD}_k} f^{\star}_{D_k}(d)\log\frac{f^{\star}_{D_k}(d)}{f^{\calM_\alpha}_{D_k}(d)}d\mu(d). 
\end{align*}

Therefore,
\begin{align*}
    JS_{\alpha}(F^{\calA}_k || F^{\star}_k)&\le \max\left\{2\alpha\int_{\mathcal{\calD}_k} f^{\calA}_{D_k}(d)\left|\log\frac{f^{\calA}_{D_k}(d)}{f^{\calM_\alpha}_{D_k}(d)}\right|d\mu(d), 2(1-\alpha)\int_{\mathcal{\calD}_k} f^{\star}_{D_k}(d)\left|\log\frac{f^{\star}_{D_k}(d)}{f^{\calM_\alpha}_{D_k}(d)}\right|d\mu(d)\right\}\\
    &\le \max\left\{2\alpha\left|\log\frac{f^{\calA}_{D_k}(d)}{f^{\calM_\alpha}_{D_k}(d)}\right|, 2(1-\alpha)\left|\log\frac{f^{\star}_{D_k}(d)}{f^{\calM_\alpha}_{D_k}(d)}\right|\right\}.
\end{align*}

\textbf{Bounding $\left|f^{\calA}_{D_k}(d) - f^{\star}_{D_k}(d)\right|$.}

Let $\mathcal U_k = \{w\in\mathcal W_k: dP^{d}_k(w) - dP^{o}_k(w)>0\}$, and $\mathcal V_k = \{w\in\mathcal W_k: dP^{d}_k(w) - dP^{o}_k(w)\le 0\}$. Then we have 

\begin{align*}
    \left|f^{\calA}_{D_k}(d) - f^{\star}_{D_k}(d)\right| &= \left|\int_{\mathcal W_k} f_{D_k|W_k}(d|w)[d P^{d}_k(w) - d P^{o}_k(w)]\right|\nonumber\\
    &= \left|\int_{\mathcal{U}_k} f_{D_k|W_k}(d|w)[d P^{d}_k(w) - d P^{o}_k(w)] + \int_{\mathcal{V}_k} f_{D_k|W_k}(d|w)[d P^{d}_k(w) - d P^{o}_k(w)]\right|\nonumber\\
    &\le\sup_{w\in\mathcal{W}_k} f_{D_k|W_k}(d|w)\int_{\U} [dP^{d}_k(w) - dP^{o}_k(w)] + \inf_{w\in\mathcal{W}_k} f_{D_k|W_k}(d|w)\int_\V[dP^{d}_k(w) - dP^{o}_k(w)]\nonumber\\
    &=\left(\sup_{w\in\mathcal{W}_k} f_{D_k|W_k}(d|w) - \inf_{w\in\mathcal{W}_k} f_{D_k|W_k}(d|w)\right)\int_{\mathcal{U}_k} [d P^{d}_k(w) - d P^{o}_k(w)].
\end{align*}

First, we provide an upper bound for $(\sup_{w\in\mathcal{W}_k} f_{D_k|W_k}(d|w) - \inf_{w\in\mathcal{W}_k} f_{D_k|W_k}(d|w))$.%the first term of \pref{eq:initial_step}.
%\textbf{Bounding $\sup_{y} p(s|y) - \inf_{y} p(s|y)$.}

For any pair $w$ and $w'$, we have
\begin{align*}
    |f_{D_k|W_k}(d|w) - f_{D_k|W_k}(d|w')| = f_{D_k|W_k}(d|w')\left|\frac{f_{D_k|W_k}(d|w)}{f_{D_k|W_k}(d|w')}-1\right|.
\end{align*}

Therefore,

\begin{align*}
    \sup_{w\in\calW_k} f_{D_k|W_k}(d|w) - \inf_{w\in\calW_k} f_{D_k|W_k}(d|w) = \inf_{w\in\calW_k} f_{D_k|W_k}(d|w)\left|\frac{\sup_{w\in\calW_k} f_{D_k|W_k}(d|w)}{\inf_{w\in\calW_k} f_{D_k|W_k}(d|w)}-1\right|.
\end{align*}

%%%%% Using a lemma to illustrate this property
From the definition of maximum privacy leakage, we know that
\begin{align*}
    \frac{f_{D_k|W_k}(d|w)}{f_{D_k}(d)}\in [e^{-\delta}, e^{\delta}],
\end{align*}
for any $w\in\mathcal W_k$.

Therefore, for any $w, w'\in\mathcal W_k$ we have
\begin{align*}
    \frac{f_{D_k|W_k}(d|w)}{f_{D_k|W_k}(d|w')} = \frac{f_{D_k|W_k}(d|w)}{f_{D_k}(d)}/\frac{f_{D_k|W_k}(d|w')}{f_{D_k}(d)}\in [e^{-2\delta}, e^{2\delta}]. 
\end{align*}
%%%%% Using a lemma to illustrate this property

Therefore, the first term of \pref{eq:initial_step} is bounded by 
\begin{align}\label{eq: bound_1_term_1}
    \sup_{w\in\calW_k} f_{D_k|W_k}(d|w) - \inf_{w\in\calW_k} f_{D_k|W_k}(d|w)\le \inf_{w} f_{D_k|W_k}(d|w)(e^{2\delta}-1).
\end{align}

From the definition of total variation distance, we have
\begin{align}\label{eq: bound_1_term_2}
    \int_\U [dP^{d}_k(w) - dP^{o}_k(w)(w)] = ||P^{o}_k - P^{d}_k||_{\text{TV}}.
\end{align}

Combining \pref{eq: bound_1_term_1} and \pref{eq: bound_1_term_2}, we have
\begin{align}\label{eq: bound_for_the_gap_general}
        |f^{\calA}_{D_k}(d) - f^{\star}_{D_k}(d)| &=\left(\sup_{w\in\calW_k} f_{D_k|W_k}(d|w) - \inf_{w\in\calW_k} f_{D_k|W_k}(d|w)\right)\int_\U [dP^{d}_k(w) - dP^{o}_k(w)]\nonumber\\
        &\le \inf_{w\in\calW_k} f_{D_k|W_k}(d|w)(e^{2\delta}-1)||P^{o}_k - P^{d}_k||_{\text{TV}}.
\end{align}

\textbf{Bounding $\log\frac{f^{\calA}_{D_k}(d)}{f^{\calM_\alpha}_{D_k}(d)}$ and $\log\frac{f^{\star}_{D_k}(d)}{f^{\calM_\alpha}_{D_k}(d)}$.}

From \pref{lem: log_upper_bound}, we have

\begin{align*}
    \left|\log\frac{f^{\calA}_{D_k}(d)}{f^{\calM_\alpha}_{D_k}(d)}\right|&\le\frac{|f^{\calA}_{D_k}(d) - f^{\calM_\alpha}_{D_k}(d)|}{\min\{f^{\calA}_{D_k}(d), f^{\calM_\alpha}_{D_k}(d)\}},
\end{align*}

and

\begin{align*}
    \left|\log\frac{f^{\star}_{D_k}(d)}{f^{\calM_\alpha}_{D_k}(d)}\right|&\le\frac{|f^{\calA}_{D_k}(d) - f^{\calM_\alpha}_{D_k}(d)|}{\min\{f^{\calA}_{D_k}(d), f^{\calM_\alpha}_{D_k}(d)\}}.
\end{align*}

Recall that $f^{\calM_\alpha}_{D_k}(d) = \alpha f^{\calA}_{D_k}(d) + (1-\alpha)f^{\star}_{D_k}(d)$. Therefore, we have that
\begin{align*}
    f^{\calA}_{D_k}(d) - f^{\calM_\alpha}_{D_k}(d)  &= (1-\alpha)(f^{\calA}_{D_k}(d) - f^{\star}_{D_k}(d)),
\end{align*}

and

\begin{align*}
    f^{\star}_{D_k}(d) - f^{\calM_\alpha}_{D_k}(d)  &= \alpha (f^{\star}_{D_k}(d) - f^{\calA}_{D_k}(d)).
\end{align*}

We also have that
\begin{align*}
    \min\{f^{\calA}_{D_k}(d), f^{\calM_\alpha}_{D_k}(d)\}\ge\min\{f^{\calA}_{D_k}(d), f^{\star}_{D_k}(d)\},
\end{align*}

and
\begin{align*}
    \min\{f^{\star}_{D_k}(d), f^{\calM_\alpha}_{D_k}(d)\}\ge\min\{f^{\star}_{D_k}(d), f^{\calA}_{D_k}(d)\}.
\end{align*}

Therefore, 
\begin{align*}
    \max_{d\in\mathcal D_k}&\left\{2\alpha\left|\log\frac{f^{\calA}_{D_k}(d)}{f^{\calM_\alpha}_{D_k}(d)}\right|, 2(1-\alpha)\left|\log\frac{f^{\star}_{D_k}(d)}{f^{\calM_\alpha}_{D_k}(d)}\right|\right\}\nonumber\\
    &\le\frac{\max\{2\alpha|f^{\calA}_{D_k}(d) - f^{\calM_\alpha}_{D_k}(d)|, 2(1-\alpha)|f^{\star}_{D_k}(d) - f^{\calM_\alpha}_{D_k}(d)|\}}{\min\{f^{\calA}_{D_k}(d), f^{\calM_\alpha}_{D_k}(d)\}}\nonumber\\
    &\le \frac{2\alpha(1-\alpha)|f^{\calA}_{D_k}(d) - f^{\star}_{D_k}(d)|}{\min\{f^{\calA}_{D_k}(d), f^{\star}_{D_k}(d)\}}\nonumber\\
    &\le 2\alpha(1-\alpha)(e^{2\delta}-1)||P^{o}_k - P^{d}_k||_{\text{TV}},
\end{align*}
where the third inequality is due to $\min\{f^{\calA}_{D_k}(d), f^{\star}_{D_k}(d)\}\ge \inf_{w} f_{D_k|W_k}(d|w)$ and \pref{eq: bound_for_the_gap_general}.

Therefore, 
\begin{align*}
   JS_{\alpha}(F^{\calA}_k || F^{\star}_k)&\le \max_{d\in\calD_k}\left\{2\alpha\left|\log\frac{f^{\calA}_{D_k}(d)}{f^{\calM_\alpha}_{D_k}(d)}\right|, 2(1-\alpha)\left|\log\frac{f^{\star}_{D_k}(d)}{f^{\calM_\alpha}_{D_k}(d)}\right|\right\}\\
    &\le 2\alpha(1-\alpha)(e^{2\delta}-1)||P^{o}_k - P^{d}_k||_{\text{TV}}.
\end{align*}

\end{proof}

%\subsection{Analysis for \pref{lem: total_variation-privacy trade-off_GJSD_mt}}

\begin{lem}\label{lem: total_variation-privacy trade-off_GJSD}
Define the privacy leakage of any parameter $w$ released by client $k$ as $\epsilon_{p,w,k} = \log\left(\frac{f_{D_k|W_k}(d|w)}{f_{D_k}(d)}\right)$. Let \pref{assump: defi_of_Delta} hold, and $\epsilon_{p,k}$ be defined in \pref{defi: average_privacy_dtv_app}. Let $P^{o}_k$ and $P^d_k$ represent the distribution of the parameter of client $k$ before and after being protected. Let $F^{\calB}_k$ and $F^{\calA}_k$ represent the belief of client $k$ about $D$ before and after observing the released parameter. Then, we have
\begin{align*}%\label{eq: total_variation-privacy trade-off_GJSD}
    JS_{\alpha}(F^{\star}_k || F^{\calO}_k)^{1/e}\le\epsilon_{p,k} + \left[2\alpha(1-\alpha)(e^{2\delta}-1)||P^{o}_k - P^{d}_k||_{\text{TV}}\right]^{1/e}.
\end{align*}
\end{lem}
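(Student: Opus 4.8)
The plan is to assemble the bound from three ingredients that are already in hand: the generalized triangle inequality of \pref{lem: triangle_inequality_mt}, the definition of the privacy leakage in \pref{defi: average_privacy_alpha_JSD}, and the total-variation bound on the cross-divergence in \pref{lem: GeneralizedJSBound_app}. The argument is the $\alpha$-skew analogue of the way \pref{property: TriangleiIneq} would be chained with a divergence bound in the standard JS case; here $F^{\star}_k$ plays the role of $F^{*}_k$ appearing in \pref{lem: triangle_inequality_mt}, so the three distributions in the two statements are consistently identified.

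First I would apply \pref{lem: triangle_inequality_mt} to the triple $F^{\star}_k$, $F^{\calO}_k$, $F^{\calA}_k$, which gives
\begin{align*}
JS_{\alpha}(F^{\star}_k || F^{\calO}_k)^{1/e}\le JS_{\alpha}(F^{\calA}_k || F^{\calO}_k)^{1/e} + JS_{\alpha}(F^{\calA}_k || F^{\star}_k)^{1/e}.
\end{align*}
By \pref{defi: average_privacy_alpha_JSD} the first summand on the right is precisely $\epsilon_{p,k}$, so the task collapses to controlling the cross term $JS_{\alpha}(F^{\calA}_k || F^{\star}_k)^{1/e}$.

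For that cross term I would invoke \pref{lem: GeneralizedJSBound_app}, which yields $JS_{\alpha}(F^{\calA}_k || F^{\star}_k)\le 2\alpha(1-\alpha)(e^{2\delta}-1)||P^{o}_k - P^{d}_k||_{\text{TV}}$. Because $t\mapsto t^{1/e}$ is monotonically increasing on $[0,\infty)$ (the exponent $1/e$ is positive and every divergence involved is nonnegative), raising both sides to the power $1/e$ preserves the inequality, giving $JS_{\alpha}(F^{\calA}_k || F^{\star}_k)^{1/e}\le\left[2\alpha(1-\alpha)(e^{2\delta}-1)||P^{o}_k - P^{d}_k||_{\text{TV}}\right]^{1/e}$. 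Substituting this together with $\epsilon_{p,k}$ back into the triangle inequality produces exactly the claimed estimate.

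No genuine obstacle remains at this stage: the analytic work has been front-loaded into \pref{lem: triangle_inequality_mt} (attributed to Lemma 4.7 of \cite{chen2008metrics}) and into \pref{lem: GeneralizedJSBound_app} (the chain that passes from $JS_\alpha$ through the KL terms and the pointwise density-ratio estimate down to the total variation distance). The only place warranting a line of care is the monotonicity of the $1/e$-th root used to transfer the raw divergence bound through the exponent; this is immediate since the exponent is positive and the arguments are nonnegative, so no reverse-inequality subtlety arises.
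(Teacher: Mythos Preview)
Your proposal is correct and follows essentially the same route as the paper: apply the triangle inequality of \pref{lem: triangle_inequality_mt}, identify the first summand with $\epsilon_{p,k}$ via the $\alpha$-skew JS definition, and bound the cross term $JS_{\alpha}(F^{\calA}_k\|F^{\star}_k)$ using \pref{lem: GeneralizedJSBound_app} together with the monotonicity of $t\mapsto t^{1/e}$. The only cosmetic difference is ordering (the paper cites \pref{lem: GeneralizedJSBound_app} first and then the triangle inequality), and you correctly pin down that the relevant definition of $\epsilon_{p,k}$ here is the one in \pref{defi: average_privacy_alpha_JSD} rather than the TV-based \pref{defi: average_privacy_dtv_app} referenced in the lemma statement.
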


\begin{proof}
From \pref{lem: GeneralizedJSBound_app}, we know that 
\begin{align*}
JS_{\alpha}(F^{\calA}_k || F^{\star}_k)\le 2\alpha(1-\alpha)(e^{2\delta}-1)||P^{o}_k - P^{d}_k||_{\text{TV}}. 
\end{align*}

Notice that $JS_{\alpha}^{1/e}$ satisfies the triangle inequality from \pref{lem: triangle_inequality_mt}. That is, $JS_{\alpha}(F^{\star}_k || F^{\calO}_k)^{1/e}\le JS_{\alpha}(F^{\calA}_k, F^{\calO}_k)^{1/e} + JS_{\alpha}(F^{\calA}_k || F^{\star}_k)^{1/e}$.

\begin{align*}
    JS_{\alpha}(F^{\star}_k || F^{\calO}_k)^{1/e} - JS_{\alpha}(F^{\calA}_k || F^{\calO}_k)^{1/e}\le JS_{\alpha}(F^{\calA}_k || F^{\star}_k)^{1/e}\le \left[2\alpha(1-\alpha)(e^{2\delta}-1)||P^{o}_k - P^{d}_k||_{\text{TV}}\right]^{1/e}.
\end{align*}

Therefore, we have that
\begin{align*}
    JS_{\alpha}(F^{\star}_k || F^{\calO}_k)^{1/e} &\le JS(F^{\calA}_k || F^{\calO}_k)^{1/e} + \left[2\alpha(1-\alpha)(e^{2\delta}-1)||P^{o}_k - P^{d}_k||_{\text{TV}}\right]^{1/e}\\
    & = \epsilon_{p,k} + \left[2\alpha(1-\alpha)(e^{2\delta}-1)||P^{o}_k - P^{d}_k||_{\text{TV}}\right]^{1/e}.
\end{align*}
\end{proof}

\subsection{Analysis of \pref{thm: utility-privacy trade-off_GJSD_mt}}

With \pref{lem: total_variation-privacy trade-off_GJSD} and \pref{lem: total_variation-utility trade-off}, it is now natural to provide a quantitative relationship between the utility loss and the privacy leakage (\pref{thm: utility-privacy trade-off_GJSD_mt}).

\begin{thm}\label{thm: utility-privacy trade-off_JSD}
Define the privacy leakage of any parameter $w$ released by client $k$ as $\epsilon_{p,w,k} = \log\left(\frac{f_{D_k|W_k}(d|w)}{f_{D_k}(d)}\right)$. Let \pref{assump: defi_of_Delta} hold, and $\epsilon_{p,k}$ and $\epsilon_{u}$ be defined in \pref{defi: average_privacy_dtv_app} and \pref{defi: utility_loss}. Let $P^{o}_k$ and $P^d_k$ represent the distribution of the parameter of client $k$ before and after being protected. Let $F^{\calB}_k$ and $F^{\calA}_k$ represent the belief of client $k$ about $D$ before and after observing the released parameter. Then we have
\begin{align}
 \frac{1}{K}\sum_{k=1}^K JS_{\alpha}(F^{\star}_k || F^{\calO}_k)^{1/e}\le\frac{1}{K}\sum_{k=1}^K \epsilon_{p,k} + 2\gamma\alpha(1-\alpha)(e^{2\delta}-1)\epsilon_u/\Delta,
\end{align}
where $\gamma = \frac{\frac{1}{K}\sum_{k=1}^K ||P^{o}_k - P^d_k||^{1/e}_{\text{TV}}}{||P^d - P^{o}||_{\text{TV}}}$.
\end{thm}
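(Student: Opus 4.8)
The plan is to combine the two per-client estimates already in hand — the privacy bound of \pref{lem: total_variation-privacy trade-off_GJSD} and the utility bound of \pref{lem: total_variation-utility trade-off} — and to average over the $K$ clients, using the definition of $\gamma$ to repackage the residual total-variation term as a utility-loss term. No new inequality is required beyond these two lemmas; the argument is essentially bookkeeping. Recall that \pref{lem: total_variation-privacy trade-off_GJSD} gives, for each client $k$,
\[
JS_{\alpha}(F^{\star}_k || F^{\calO}_k)^{1/e}\le\epsilon_{p,k} + \left[2\alpha(1-\alpha)(e^{2\delta}-1)||P^{o}_k - P^{d}_k||_{\text{TV}}\right]^{1/e},
\]
while \pref{lem: total_variation-utility trade-off} lower-bounds $\epsilon_u$ by a constant times the aggregated distance $||P^d_a - P^{o}_a||_{\text{TV}}$.

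First I would average the per-client privacy bound over $k\in\{1,\dots,K\}$ to obtain
\[
\frac{1}{K}\sum_{k=1}^K JS_{\alpha}(F^{\star}_k || F^{\calO}_k)^{1/e}\le\overline\epsilon_p + \frac{1}{K}\sum_{k=1}^K\left[2\alpha(1-\alpha)(e^{2\delta}-1)||P^{o}_k - P^{d}_k||_{\text{TV}}\right]^{1/e},
\]
with $\overline\epsilon_p=\frac1K\sum_k\epsilon_{p,k}$. Since the factor $2\alpha(1-\alpha)(e^{2\delta}-1)$ does not depend on $k$, it pulls out of each $1/e$-power, so the second term is a constant multiple of $\frac{1}{K}\sum_{k=1}^K ||P^{o}_k - P^{d}_k||^{1/e}_{\text{TV}}$.

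Next I would substitute the definition $\gamma = \frac{\frac{1}{K}\sum_{k=1}^K ||P^{o}_k - P^d_k||^{1/e}_{\text{TV}}}{||P^d - P^{o}||_{\text{TV}}}$, which by construction is exactly the identity $\frac{1}{K}\sum_{k=1}^K ||P^{o}_k - P^d_k||^{1/e}_{\text{TV}} = \gamma\,||P^d_a - P^{o}_a||_{\text{TV}}$. I would then apply \pref{lem: total_variation-utility trade-off}, which under \pref{assump: defi_of_Delta} bounds the aggregated distance $||P^d_a - P^{o}_a||_{\text{TV}}$ by a constant multiple of $\epsilon_u/\Delta$, where $\Delta$ collects the majority gaps $\Delta_k$; inserting this bound and gathering the constant factor yields the claimed inequality with its $\gamma$, $\alpha$, $\delta$, and $\epsilon_u/\Delta$ dependence.

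The main obstacle — really the only delicate point — will be the passage from the per-client distances $||P^{o}_k - P^{d}_k||_{\text{TV}}$ appearing in the averaged privacy bound to the single aggregated distance $||P^d_a - P^{o}_a||_{\text{TV}}$ that the utility lemma controls. The ratio $\gamma$ is defined precisely so that this conversion is an identity rather than an inequality, so I must keep the $1/e$-powers inside the numerator of $\gamma$ and track the normalization of $\Delta$ (the relation between $\sum_k\Delta_k$ and its per-client average) so that the constant multiplying $\epsilon_u/\Delta$ comes out consistently. Once these bookkeeping choices are fixed, the remaining manipulations are routine algebra.
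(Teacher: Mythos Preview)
Your proposal is correct and follows essentially the same approach as the paper: average \pref{lem: total_variation-privacy trade-off_GJSD} over the $K$ clients, invoke \pref{lem: total_variation-utility trade-off} to replace the aggregated total-variation distance by $\epsilon_u/\Delta$, and use the definition of $\gamma$ to absorb the per-client sum $\frac{1}{K}\sum_k\|P^o_k-P^d_k\|_{\text{TV}}^{1/e}$. Your attention to keeping the $1/e$-powers inside the numerator of $\gamma$ and tracking the $\Delta_k$ normalization is exactly the bookkeeping the paper glosses over.
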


\begin{proof}
Let $T$ represent the round when the algorithm starts to converge. From \pref{lem: total_variation-privacy trade-off_GJSD}, we have

\begin{align}\label{eq: converge_eq_1_modify_1}
   \frac{1}{K}\sum_{k=1}^K JS_{\alpha}(F^{\star}_k || F^{\calO}_k)^{1/e}\le\frac{1}{K}\sum_{k=1}^K \epsilon_{p,k} + \frac{1}{K}\sum_{k=1}^K \left[2\alpha(1-\alpha)(e^{2\delta}-1)||P^{o}_k - P^{d}_k||_{\text{TV}}\right]^{1/e}.
\end{align}

From \pref{lem: total_variation-utility trade-off}, we have
\begin{align}\label{eq: converge_eq_2}
    \epsilon_{u} \ge \frac{1}{2K}\sum_{k=1}^K \Delta_k||P^d_a - P^{o}_a||_{\text{TV}}.
\end{align}

Combining \pref{eq: converge_eq_1_modify_1} and \pref{eq: converge_eq_2}, we have that

\begin{align*}
 \frac{1}{K}\sum_{k=1}^K JS_{\alpha}(F^{\star}_k || F^{\calO}_k)^{1/e}\le\overline\epsilon_{p} + 2\gamma[2\alpha(1-\alpha)(e^{2\delta}-1)]^{1/e}\epsilon_u/\overline\Delta,
\end{align*}
where $\gamma = \frac{\frac{1}{K}\sum_{k=1}^K ||P^{o}_k - P^d_k||^{1/e}_{\text{TV}}}{||P^d - P^{o}||_{\text{TV}}}$, $\overline\Delta = \sum_{k=1}^K \Delta_k$, and $\overline\epsilon_{p} = \frac{1}{K}\sum_{k=1}^K \epsilon_{p,k}$.

This equation could be simplified as
\begin{align*}
    C_1 &\le\frac{1}{K}\sum_{k=1}^K \epsilon_{p,k} + C_2\epsilon_u.
\end{align*}

\end{proof}

\section{Measuring privacy leakage using total variation distance}\label{sec:tvd PL}

The main result generalizes to the case where distance are measured using the total variation distance. 
\subsection{The privacy leakage}
The average privacy leakage is calculated by taking average over all possible assignments of $d$ and $w$.
First we give a definition of average privacy leakage:
\begin{defi}\label{defi: average_privacy_dtv_app}
The average privacy leakage is defined as
\begin{align}
    \epsilon_{p,k} = \int_{d\in\mathcal D_k}p_\calD(d)|f^{\calA}_{D_k}(d) - f_{D_k}(d)|d\mu(d),
\end{align}
where $f^{\calA}_{D_k}(d) = \int_{\mathcal{W}_k} f_{D_k|W_k}(d|w)dP^d_k(w)$.
\end{defi}

\subsection{The Theoretical Analysis}\label{sec: sketch_of_the_analysis}

\subsubsection{The quantitative relationship between $||P^d_k - P^{o}_k||_{\text{TV}}$ and $\delta$}

\begin{lem}\label{lem: dtv_bound}

Let $f^{\calA}_D(d) = \int_{\mathcal{W}_k} f_{D_k|W_k}(d|w)dP^d_k(w)$, and $F^{\star}_k(d) = \int_{\mathcal{W}_k} f_{D_k|W_k}(d|w)dP^{o}_k(w)$, where $D$ represents the private information, and $W$ represents the parameter. For any $\delta\ge 0$, let $f_{W|D}(\cdot|\cdot)$ be a privacy preserving mapping that guarantees $\delta$-maximum Bayesian privacy. That is,
\begin{align*}
    |f_{D_k|W_k}(d|w) - f_{D_k}(d)|\le \delta,
\end{align*}
for any $w\in\mathcal{W}_k$ and any $d\in\mathcal \calD_k$. 
Then, we have
\begin{align*}
    ||F^{\calA}_k - F^{\star}_k||_{\text{TV}} \le 2\delta||P^{o}_k - P^d_k||_{\text{TV}}.
\end{align*}
\end{lem}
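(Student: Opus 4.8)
The plan is to reduce the total-variation bound to a uniform pointwise estimate on the difference of the two belief densities, following the same decomposition used in \pref{lem: GeneralizedJSBound_app} but replacing the multiplicative ratio bound by the additive $\delta$-privacy guarantee. First I would record, for each fixed $d\in\calD_k$, the identity
\[
f^{\calA}_{D_k}(d) - f^{\star}_{D_k}(d) = \int_{\mathcal W_k} f_{D_k|W_k}(d|w)\,[dP^d_k(w) - dP^o_k(w)],
\]
which is immediate from the definitions of $f^{\calA}_{D_k}$ and $f^{\star}_{D_k}$. Since the signed measure $dP^d_k - dP^o_k$ integrates to zero over $\mathcal W_k$, I may subtract $f_{D_k}(d)$ inside the integrand for free, rewriting the right-hand side as $\int_{\mathcal W_k}[f_{D_k|W_k}(d|w) - f_{D_k}(d)]\,[dP^d_k(w) - dP^o_k(w)]$.

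Next I would split $\mathcal W_k$ into $\mathcal U_k = \{w : dP^d_k(w) - dP^o_k(w) > 0\}$ and $\mathcal V_k = \{w : dP^d_k(w) - dP^o_k(w)\le 0\}$, exactly as in \pref{lem: GeneralizedJSBound_app}. On the positive and negative parts I bound $|f_{D_k|W_k}(d|w) - f_{D_k}(d)|$ by $\delta$ using the hypothesis; equivalently, the hypothesis forces $\sup_{w} f_{D_k|W_k}(d|w) - \inf_{w} f_{D_k|W_k}(d|w)\le 2\delta$, since both the supremum and the infimum lie within $\delta$ of the marginal $f_{D_k}(d)$. This additive oscillation bound is the exact analogue of the multiplicative factor $(e^{2\delta}-1)$ appearing in \pref{lem: GeneralizedJSBound_app}, and it is the conceptual heart of the argument. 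Combining with the Jordan-decomposition identity $\int_{\mathcal U_k}[dP^d_k(w) - dP^o_k(w)] = ||P^o_k - P^d_k||_{\text{TV}}$ yields the uniform pointwise estimate
\[
|f^{\calA}_{D_k}(d) - f^{\star}_{D_k}(d)| \le 2\delta\,||P^o_k - P^d_k||_{\text{TV}}.
\]

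Finally I would pass from this pointwise density bound to the claimed bound $||F^{\calA}_k - F^{\star}_k||_{\text{TV}}\le 2\delta\,||P^o_k - P^d_k||_{\text{TV}}$ by integrating against the reference measure. I expect this last step to be the main obstacle: the uniform bound on $|f^{\calA}_{D_k} - f^{\star}_{D_k}|$ is an $L^\infty$-type statement, whereas the total variation distance is the normalized $L^1$ difference $\tfrac12\int_{\calD_k}|f^{\calA}_{D_k}(d) - f^{\star}_{D_k}(d)|\,d\mu(d)$, so transferring the estimate requires the reference measure on $\calD_k$ to be a probability measure (or, equivalently, reinterpreting $\delta$ as a bound on $||F_{D_k|W_k=w} - F_{D_k}||_{\text{TV}}$ and applying Fubini to the resulting double integral). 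Tracking the factor of two through this normalization so that it is absorbed rather than doubling the final constant is the one place where care is genuinely needed; the remainder is a direct transcription of the $\alpha$-skew computation with the additive privacy bound in place of the multiplicative one, which is precisely what lets the total-variation analysis hold for every $\delta\ge 0$ without the $\alpha = 1/2$ restriction.
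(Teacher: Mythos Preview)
Your proposal is correct and follows essentially the same route as the paper's proof: write $f^{\calA}_{D_k}(d)-f^{\star}_{D_k}(d)$ as an integral against the signed measure $dP^d_k-dP^o_k$, split $\mathcal W_k$ into $\mathcal U_k$ and $\mathcal V_k$, bound the oscillation $\sup_w f_{D_k|W_k}(d|w)-\inf_w f_{D_k|W_k}(d|w)\le 2\delta$ via the additive privacy hypothesis, and use $\int_{\mathcal U_k}[dP^d_k-dP^o_k]=\|P^o_k-P^d_k\|_{\text{TV}}$ to obtain the uniform pointwise bound before integrating. The only cosmetic difference is that you explicitly subtract $f_{D_k}(d)$ using the zero-mass property of the signed measure, whereas the paper goes directly to the $\sup-\inf$ estimate; the two are equivalent. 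As for the final integration step you flag as the main obstacle, the paper simply writes $\|F^{\calA}_k-F^{\star}_k\|_{\text{TV}}=\int_{\calD_k}|f^{\calA}_{D_k}(d)-f^{\star}_{D_k}(d)|\,d\mu(d)$ and passes the uniform bound through, implicitly treating $\mu$ as a probability measure on $\calD_k$ and using the unnormalized $L^1$ form of total variation, so your caution about the normalization is well placed but does not reflect any additional argument present in the paper.
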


\begin{proof}
Let $\mathcal U_k = \{w\in\mathcal{W}_k: dP^{d}_k(w) - dP^{o}_k(w)>0\}$, and $\mathcal V_k = \{w\in\mathcal{W}_k: dP^{d}_k(w) - dP^{o}_k(w)\le 0\}$.
\begin{align}\label{eq:initial_step}
    \left|f^{\calA}_{D_k}(d) - f^{\star}_{D_k}(d)\right| &= \left|\int_{\mathcal{W}_k} f_{D_k|W_k}(d|w)[d P^{d}_k(w) - d P^{o}_k(w)]\right|\nonumber\\
    &= \left|\int_{\mathcal{U}_k} f_{D_k|W_k}(d|w)[d P^{d}_k(w) - d P^{o}_k(w)] + \int_{\mathcal{V}_k} f_{D_k|W_k}(d|w)[d P^{d}_k(w) - d P^{o}_k(w)]\right|\nonumber\\
    &\le\sup_{w\in\mathcal U_k} f_{D_k|W_k}(d|w)\int_{\U} [d P^{d}_k(w) - d P^{o}_k(w)] + \inf_{w\in\mathcal V_k} f_{D_k|W_k}(d|w)\int_\V[d P^{d}_k(w) - d P^{o}_k(w)]\nonumber\\
    &\le\left(\sup_{w\in\mathcal{W}_k} f_{D_k|W_k}(d|w) - \inf_{w\in\mathcal{W}_k} f_{D_k|W_k}(d|w)\right)\int_\U [d P^{d}_k(w) - d P^{o}_k(w)]
\end{align}
where the fourth equality is due to $\mathcal V\subset W^\star$, and the last equality is due to the definition of $w^{\star}$.
First, we provide an upper bound for $(\sup_{w} f_{D_k|W_k}(d|w) - \inf_{w} f_{D_k|W_k}(d|w))$.

From the definition of maximum privacy leakage, we know that for any $w\in\mathcal{W}_k$,
\begin{align*}
    f_{D_k|W_k}(d|w) - f_D(d)\le \delta,
\end{align*}
and
\begin{align*}
    f_{D_k|W_k}(d|w) - f_D(d)\ge -\delta.
\end{align*}
\begin{align}\label{eq: bound_1_term_1_new}
    \sup_{w\in\mathcal{W}_k} f_{D_k|W_k}(d|w) - \inf_{w\in\mathcal{W}_k}f_{D_k|W_k}(d|w) &= \sup_{w\in\mathcal{W}_k} f_{D_k|W_k}(d|w) - f_D(d) - (\inf_{w\in\mathcal{W}_k}f_{D_k|W_k}(d|w) - f_D(d))\nonumber\\
    &\le 2\delta.
\end{align}

From the definition of total variation distance, we have
\begin{align}\label{eq: bound_1_term_2_new}
    \int_\U [d P^{d}_k(w) - d P^{o}_k(w)] = ||P^{o}_k - P^d_k||_{\text{TV}}.  
\end{align}

Combining \pref{eq: bound_1_term_1_new} and \pref{eq: bound_1_term_2_new}, we have
\begin{align}\label{eq: bound_for_the_gap_new}
        |f^{\calA}_{D_k}(d) - f^{\star}_{D_k}(d)| &\le\left(\sup_{w\in\mathcal{W}_k} f_{D_k|W_k}(d|w) - \inf_{w\in\mathcal{W}_k} f_{D_k|W_k}(d|w)\right)\int_\U [d P^{d}_k(w) - d P^{o}_k(w)]\nonumber\\
        &\le 2\delta\cdot ||P^{O}_k - P^d_k||_{\text{TV}}.
\end{align}

Therefore, we have
\begin{align*}
    ||F^{\calA}_k - F^{\star}_k||_{\text{TV}} & = \int_{\mathcal{D}_k} \left|(f^{\calA}_{D_k}(d) - f^{\star}_{D_k}(d))\right|d\mu(d)\\
    &\le 2\delta||P^{o}_k - P^d_k||_{\text{TV}}.
    %&\le 2\delta||P^{o}_k - P^d_k||_{\text{TV}}.
\end{align*}

\end{proof}

\begin{lem}\label{lem: total_variation-privacy trade-off_dtv}
Let $f^{\calA}_{D_k}(d) = \int_{\mathcal{W}_k} f_{D_k|W_k}(d|w)dP^d_k(w)$, and $f^{\star}_{D_k}(d) = \int_{\mathcal{W}_k} f_{D_k|W_k}(d|w)dP^{o}_k(w)$, where $D$ represents the privation information, and $W$ represents the parameter. For any $\delta\ge 0$, let $f_{W|D}(\cdot|\cdot)$ be a privacy preserving mapping that guarantees $\delta$-maximum Bayesian privacy. That is,
\begin{align*}
    |f_{D_k|W_k}(d|w) - f_{D_k}(d)|\le \delta,
\end{align*}
for any $w\in\mathcal{W}_k$ and any $d\in\mathcal \calD_k$.
Then, we have
\begin{align}\label{eq: total_variation-privacy trade-off_dtv}
    ||F^{\star}_k - F^{\calO}_k||_{\text{TV}}\le \epsilon_{p,k} +  2\delta||P^{o}_k - P^d_k||_{\text{TV}}.
\end{align}
\end{lem}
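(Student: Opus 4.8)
The plan is to obtain the bound directly from the triangle inequality for the total variation distance, combined with the already-established \pref{lem: dtv_bound}. The three relevant belief distributions are $F^{\star}_k$ (the belief after observing the unprotected parameter, with density $f^{\star}_{D_k}(d)=\int_{\mathcal{W}_k} f_{D_k|W_k}(d|w)\,dP^{o}_k(w)$), $F^{\calA}_k$ (the belief after observing the protected parameter, with density $f^{\calA}_{D_k}(d)=\int_{\mathcal{W}_k} f_{D_k|W_k}(d|w)\,dP^{d}_k(w)$), and $F^{\calO}_k$ (the prior belief, with density $f_{D_k}$). Since the total variation distance used here, $\|F-G\|_{\text{TV}}=\int_{\mathcal D_k}|f(d)-g(d)|\,d\mu(d)$, is a genuine metric, I would insert the intermediate point $F^{\calA}_k$ and write
\begin{align*}
\|F^{\star}_k - F^{\calO}_k\|_{\text{TV}} \le \|F^{\star}_k - F^{\calA}_k\|_{\text{TV}} + \|F^{\calA}_k - F^{\calO}_k\|_{\text{TV}}.
\end{align*}

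Next I would bound the two terms separately. For the first, \pref{lem: dtv_bound} applies verbatim, since its hypothesis is precisely the $\delta$-maximum Bayesian privacy assumption already in force; this yields $\|F^{\star}_k - F^{\calA}_k\|_{\text{TV}}\le 2\delta\|P^{o}_k - P^d_k\|_{\text{TV}}$. For the second term, I would invoke \pref{defi: average_privacy_dtv_app}, under which $\|F^{\calA}_k - F^{\calO}_k\|_{\text{TV}} = \int_{\mathcal D_k}|f^{\calA}_{D_k}(d) - f_{D_k}(d)|\,d\mu(d)$ is exactly the average privacy leakage $\epsilon_{p,k}$. Adding the two estimates gives the claimed inequality, so no further estimation is required once these two ingredients are assembled.

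The one point needing care — and the main obstacle — is reconciling the weighting in the privacy-leakage definition with the unweighted integral produced by the triangle inequality: \pref{defi: average_privacy_dtv_app} writes $\epsilon_{p,k}=\int_{\mathcal D_k}p_{\calD}(d)\,|f^{\calA}_{D_k}(d) - f_{D_k}(d)|\,d\mu(d)$, whereas the term $\|F^{\calA}_k - F^{\calO}_k\|_{\text{TV}}$ carries no $p_{\calD}(d)$ factor. I would resolve this either by adopting the convention (implicit in the proof of \pref{lem: dtv_bound}, which evaluates every total variation distance as $\int|\cdot|\,d\mu$) that the reference measure $d\mu$ absorbs $p_{\calD}$, or by reading $\epsilon_{p,k}$ directly as the total variation distance $\|F^{\calA}_k - F^{\calO}_k\|_{\text{TV}}$ between the post-observation belief and the prior, which is the quantity the definition is meant to capture. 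With this identification fixed, the lemma follows immediately from the displayed triangle inequality and \pref{lem: dtv_bound}.
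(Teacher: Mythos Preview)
Your proposal is correct and follows essentially the same route as the paper: apply the triangle inequality for total variation with $F^{\calA}_k$ as the intermediate point, bound $\|F^{\calA}_k - F^{\star}_k\|_{\text{TV}}$ by \pref{lem: dtv_bound}, and identify $\|F^{\calA}_k - F^{\calO}_k\|_{\text{TV}}$ with $\epsilon_{p,k}$. Your remark about the extra $p_{\calD}(d)$ weight in \pref{defi: average_privacy_dtv_app} is well taken; the paper's proof makes exactly the silent identification you describe, so your handling of it is consistent with the source.
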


\begin{proof}
From \pref{lem: dtv_bound}, we know that

\begin{align*}
    ||F^{\calA}_k - F^{\star}_k||_{\text{TV}} \le2\delta||P^{o}_k - P^d_k||_{\text{TV}}.
\end{align*}

Notice that the total variation distance satisfies the triangle inequality. From the triangle inequality, we have that
\begin{align*}
    ||F^{\star}_k - F^{\calO}_k||_{\text{TV}} - ||F^{\calA}_k - F^{\calO}_k||_{\text{TV}}\le ||F^{\calA}_k - F^{\star}_k||_{\text{TV}}\le 2\delta||P^{o}_k - P^d_k||_{\text{TV}}.
\end{align*}

Therefore, we have that

\begin{align*}
    ||F^{\star}_k - F^{\calO}_k||_{\text{TV}} &\le ||F^{\calA}_k - F^{\calO}_k||_{\text{TV}} + ||F^{\calA}_k - F^{\star}_k||_{\text{TV}}\\
    &\le \epsilon_{p,k} + 2\delta||P^{o}_k - P^d_k||_{\text{TV}}.
\end{align*}
\end{proof}

\subsection{Analysis of \pref{thm: utility-privacy trade-off_TVD_mt}}

With \pref{lem: total_variation-privacy trade-off_dtv} and \pref{lem: total_variation-utility trade-off}, it is now natural to provide a quantitative relationship between the utility loss and the privacy leakage (\pref{thm: utility-privacy trade-off_TVD_mt}).

\begin{thm}\label{thm: utility-privacy trade-off_dtv}
Let \pref{assump: defi_of_Delta} hold, and $\epsilon_{p,k}$, $\epsilon_{u}$ be defined in \pref{defi: average_privacy_dtv_app} and \pref{defi: utility_loss}. Then we have
\begin{align*}
    \frac{1}{K}\sum_{k=1}^K ||F^{\star}_k - F^{\calO}_k||_{\text{TV}} \le\frac{1}{K}\sum_{k=1}^K \epsilon_{p,k} + \frac{\delta}{\Delta}(e^{2\delta}-1)\epsilon_u.
\end{align*}
\end{thm}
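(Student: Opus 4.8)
The plan is to mirror the argument used for the generalized-JS bound in \pref{thm: utility-privacy trade-off_JSD}, simply replacing the divergence triangle inequality with the ordinary triangle inequality for total variation distance (which is already exploited in \pref{lem: total_variation-privacy trade-off_dtv}). First I would take the per-client inequality supplied by \pref{lem: total_variation-privacy trade-off_dtv}, namely $||F^{\star}_k - F^{\calO}_k||_{\text{TV}} \le \epsilon_{p,k} + 2\delta||P^{o}_k - P^d_k||_{\text{TV}}$, and average it over $k = 1,\dots,K$. This immediately yields
\[
\frac{1}{K}\sum_{k=1}^K ||F^{\star}_k - F^{\calO}_k||_{\text{TV}} \le \frac{1}{K}\sum_{k=1}^K \epsilon_{p,k} + \frac{1}{K}\sum_{k=1}^K 2\delta||P^{o}_k - P^d_k||_{\text{TV}},
\]
which is exactly the first-stage bound reported in \pref{thm: utility-privacy trade-off_TVD_mt}. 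The remaining work is to absorb the second term into a multiple of the utility loss $\epsilon_u$.

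For that I would call on \pref{lem: total_variation-utility trade-off}, which gives $\epsilon_u \ge \frac{1}{2K}\sum_{k=1}^K \Delta_k ||P^d_a - P^{o}_a||_{\text{TV}}$, i.e. a lower bound on the utility loss in terms of the \emph{aggregated}-parameter TV distance $||P^d_a - P^{o}_a||_{\text{TV}}$. The crux is that the privacy bound is phrased in terms of the per-client distances $||P^{o}_k - P^d_k||_{\text{TV}}$, while the utility bound is phrased in terms of the aggregated distance. To bridge the two I would introduce the ratio $\gamma = \big(\frac{1}{K}\sum_{k} ||P^{o}_k - P^d_k||_{\text{TV}}\big)/||P^d_a - P^{o}_a||_{\text{TV}}$ together with $\overline{\Delta} = \sum_k \Delta_k$, exactly as in the companion result, so that $\frac{1}{K}\sum_k ||P^{o}_k - P^d_k||_{\text{TV}} = \gamma\,||P^d_a - P^{o}_a||_{\text{TV}}$ can be converted, via the utility lemma, into a bound of the form $C\,\gamma\,\epsilon_u/\overline{\Delta}$. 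Substituting this back into the averaged inequality closes the chain and produces a bound of the advertised shape.

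The main obstacle I anticipate is reconciling the constant factors rather than any genuine analytic difficulty. The per-client lemma (\pref{lem: total_variation-privacy trade-off_dtv}) produces the multiplicative factor $2\delta$, coming from the crude additive estimate $\sup_w f_{D_k|W_k}(d|w) - \inf_w f_{D_k|W_k}(d|w) \le 2\delta$ in \pref{lem: dtv_bound}, whereas the stated theorem carries the factor $\frac{\delta}{\Delta}(e^{2\delta}-1)$. Matching these requires either invoking the sharper multiplicative estimate $\sup - \inf \le \inf_w f_{D_k|W_k}(d|w)\,(e^{2\delta}-1)$ used in the generalized-JS analysis (\pref{lem: GeneralizedJSBound_app}) in place of the additive $2\delta$ bound, or carefully tracking how $\gamma$, $\overline{\Delta}$, and the homogeneity assumption $\Delta_k \equiv \Delta$ collapse the averaged constants into the single factor $\frac{\delta}{\Delta}(e^{2\delta}-1)$. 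I would therefore treat the constant bookkeeping --- and in particular a clear statement of whether $\Delta$ denotes a common per-client gap or the aggregate $\overline{\Delta}$ --- as the step demanding the most care, since the triangle-inequality machinery itself is entirely routine once \pref{lem: total_variation-privacy trade-off_dtv} and \pref{lem: total_variation-utility trade-off} are in hand.
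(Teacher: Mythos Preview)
Your proposal is essentially identical to the paper's own proof: the paper averages \pref{lem: total_variation-privacy trade-off_dtv} over $k$ to obtain exactly the first-stage inequality you wrote, then invokes \pref{lem: total_variation-utility trade-off} and introduces the same ratio $\gamma$ and aggregate $\overline{\Delta}$ to convert the TV term into a multiple of $\epsilon_u$. Your concern about the constants is well placed---the paper's proof in fact concludes with the factor $\frac{4\gamma\delta}{\overline{\Delta}}$ rather than the $\frac{\delta}{\Delta}(e^{2\delta}-1)$ appearing in the theorem statement, so the mismatch you flagged is an inconsistency in the paper itself rather than a gap in your argument.
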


\begin{proof}
Let $T$ represent the round when the algorithm starts to converge. From \pref{lem: total_variation-privacy trade-off_dtv}, we have

\begin{align}\label{eq: converge_eq_1_modify_1_dtv}
    \frac{1}{K}\sum_{k=1}^K ||F^{\star}_k - F^{\calO}_k||_{\text{TV}}\le \frac{1}{K}\sum_{k=1}^K \epsilon_{p,k} +  \frac{1}{K}\sum_{k=1}^K 2\delta||P^{o}_k - P^d_k||_{\text{TV}}.
\end{align}

From \pref{lem: total_variation-utility trade-off}, we have
\begin{align}\label{eq: converge_eq_2_dtv}
    \epsilon_{u} \ge \frac{1}{2K}\sum_{k=1}^K \Delta_k||P^d_a - P^{o}_a||_{\text{TV}}.
\end{align}

Combining \pref{eq: converge_eq_1_modify_1_dtv} and \pref{eq: converge_eq_2_dtv}, we have that 

\begin{align*}
    \frac{1}{K}\sum_{k=1}^K ||F^{\star}_k - F^{\calO}_k||_{\text{TV}} \le\overline\epsilon_{p} + \frac{4\gamma\delta}{\overline\Delta}\epsilon_u,
\end{align*}
where $\gamma = \frac{\frac{1}{K}\sum_{k=1}^K ||P^{o}_k - P^d_k||_{\text{TV}}}{||P^d_a - P^{o}_a||_{\text{TV}}}$, $\overline\Delta = \sum_{k=1}^K \Delta_k$, and $\overline\epsilon_{p} = \frac{1}{K}\sum_{k=1}^K \epsilon_{p,k}$.

\end{proof}

\end{appendix}

\end{document}